%% file: paper.tex
\documentclass[10pt]{article}
\usepackage[utf8]{inputenc}
\usepackage[T1]{fontenc}
\usepackage[margin=1in, left=1.25in, right=1.25in, top=1in, bottom=1in]{geometry}
\usepackage{algorithm}
\usepackage{algorithmic}
\usepackage[linktoc=all, bookmarksdepth=3, pdfstartview=FitH]{hyperref}
\hypersetup{colorlinks=true, linkcolor=blue, filecolor=magenta, urlcolor=cyan}
\usepackage{amsmath}
\usepackage{amsfonts}
\usepackage{amssymb}
\usepackage[version=4]{mhchem}
\usepackage{stmaryrd}
\usepackage{bbold}
\usepackage{graphicx}
\usepackage[export]{adjustbox}
\usepackage[dvipsnames]{xcolor}
\usepackage{float}
\usepackage{placeins}
\usepackage{caption}
\usepackage{booktabs}
\usepackage{multirow}
\usepackage[normalem]{ulem} 
\usepackage{amsthm}
\usepackage{enumitem}
\newtheorem{theorem}{Theorem}[section]
\newtheorem{lemma}[theorem]{Lemma}
\newtheorem{definition}[theorem]{Definition}

\newtheorem{proposition}[theorem]{Proposition}
\newtheorem{remark}[theorem]{Remark}

\newcommand{\del}[1]{}

\graphicspath{ {./} }

\title{RE-SAC: Disentangling aleatoric and epistemic risks in bus fleet control: A stable and robust ensemble DRL approach\footnote{Code available at \url{https://github.com/erzhu419/RE-SAC}}}

\author{
  Yifan Zhang\thanks{Central South University, \href{mailto:erzhu419@gmail.com}{erzhu419@gmail.com}, \href{mailto:204201048@csu.edu.cn}{204201048@csu.edu.cn}} \and
  Liang Zheng\thanks{Central South University, \href{mailto:zhengliang@csu.edu.cn}{zhengliang@csu.edu.cn}}
}
\date{}

\let\svthefootnote\thefootnote
\newcommand\blfootnotetext[1]{%
  \let\thefootnote\relax\footnote{#1}%
  \addtocounter{footnote}{-1}%
  \let\thefootnote\svthefootnote%
}

\let\svfootnotetext\footnotetext
\renewcommand\footnotetext[2][?]{%
  \if\relax#1\relax%
    \ifnum\value{footnote}=0\blfootnotetext{#2}\else\svfootnotetext{#2}\fi%
  \else%
    \if?#1\ifnum\value{footnote}=0\blfootnotetext{#2}\else\svfootnotetext{#2}\fi%
    \else\svfootnotetext[#1]{#2}\fi%
  \fi
}

\begin{document}
\maketitle

\begin{abstract}
Bus holding control remains a persistent challenge in urban transit systems due to the inherent stochasticity of traffic conditions and passenger demand. While deep reinforcement learning (DRL) has shown promise for dynamic holding decisions, standard actor-critic algorithms frequently suffer from $Q$-value instability in highly volatile environments. A key yet underexplored source of this instability is the conflation of two structurally distinct uncertainties: \textit{aleatoric} uncertainty, arising from irreducible environmental noise, and \textit{epistemic} uncertainty, arising from data insufficiency in the replay buffer. Treating these as a single risk leads to pathological underestimation in well-explored but noisy states, ultimately causing catastrophic value collapse.

In this paper, we propose a robust ensemble soft actor-critic (RE-SAC) framework that explicitly disentangles and addresses both sources of uncertainty. Our approach applies Integral Probability Metric (IPM)-based weight regularization to the critic network to hedge against aleatoric risk, providing a smooth analytical lower bound for the robust Bellman operator without requiring expensive inner-loop state perturbations. To address epistemic risk, a diversified $Q$-ensemble penalizes overconfident value estimates in sparsely covered regions of the state-action space. This dual mechanism prevents the ensemble variance from misidentifying inherent noise as a data gap, a failure mode that causes complete policy collapse in our ablation study.

Empirically, experiments in a realistic bidirectional bus corridor simulation demonstrate that RE-SAC achieves the highest and most stable cumulative reward (approximately $-0.4 \times 10^6$) among the tested DRL baselines and ablations, compared to $-0.55 \times 10^6$ for vanilla SAC and $-1.2 \times 10^6$ for the epistemic-only ablation. The Mahalanobis rareness analysis further confirms that RE-SAC reduces \emph{Oracle} Q-value estimation error by up to $62\%$ in rare out-of-distribution states relative to SAC (average Oracle MAE of 1647 vs.\ 4343), demonstrating superior worst-case robustness under high traffic variability.
Our theoretical analysis further shows that the frozen-parameter design (target networks, held-fixed critic weights) is more than a practical stabilisation heuristic: it is a sufficient structural condition for $\gamma$-contraction of the RE-SAC backup. We also exhibit a Q-dependent penalty of sufficient magnitude that breaks contraction. Penalties that depend on $Q$ but have sufficiently small Lipschitz constants may still preserve contraction; our negative result is therefore an explicit failure mode, not a universal impossibility---an insight with implications beyond transit control.
\end{abstract}

\noindent\textbf{Keywords:} bus bunching; robust reinforcement learning; ensemble; uncertainty decomposition; bus holding control.

\section{Introduction}

Bus bunching---the phenomenon where multiple buses on the same route cluster together---remains a well-known challenge in urban transit systems. This instability is driven by a positive feedback loop: a slight delay at a stop increases passenger accumulation, which extends dwell time, further slowing the leading bus while the trailing bus speeds up due to reduced demand~\cite{Daganzo2009Headway, Xuan2011Dynamic}. To break this cycle, various holding control strategies have been developed, transitioning from classical schedule-based optimization to deep reinforcement learning (DRL) paradigms~\cite{Wang2020DynamicHolding}.
However, despite the promising performance of DRL in controlled simulations, these models often exhibit fragile behavior when deployed in realistic, high-variance transit environments.

A critical, yet frequently overlooked bottleneck in applying Actor-Critic methods to transit control is what we term ``$Q$-value poisoning.'' In public transportation, stochasticity is not a transient disturbance but a deeply ingrained feature; fluctuations in passenger arrival rates and road traffic speeds are often irreducible. In such highly volatile environments, standard off-policy algorithms like Soft Actor-Critic (SAC) frequently suffer from pathological value estimation degradation. As training progresses, the $Q$-function may become ``poisoned'' by environmental noise, leading to severe underestimation of successful past interventions or wild oscillations in unvisited state-action regions. This phenomenon, consistent with the ``value collapse'' identified by recent studies~\cite{Ji2024SeizingSerendipity}, effectively paralyzes the agent's policy, causing it to fall into an under-exploitation trap where it fails to maintain regularity despite having encountered optimal trajectories in its experience buffer.

\begin{definition}[$Q$-value Poisoning]
    \label{def:poisoning}
    Let $Q^*$ be the optimal $Q$-function and $Q_t$ be the learned $Q$-function after $t$ gradient steps.
    Let $\sigma^2(s,a) := \mathrm{Var}_{P}[V_t(s') \mid s,a]$ denote the local aleatoric variance at $(s,a)$.
    We say the learning process suffers from \emph{$Q$-value poisoning} if there exist
    a threshold $\sigma^2_0 > 0$ and a non-vanishing set of state-action pairs
    $\mathcal{P}_t \subseteq \mathcal{S}\times\mathcal{A}$ such that:
    \begin{enumerate}[label=(\roman*)]
        \item \textbf{Selective underestimation in high-variance regions:}
        $Q_t(s,a) < Q^*(s,a)$ for all $(s,a)\in\mathcal{P}_t$ with
        $\sigma^2(s,a) \ge \sigma^2_0$; and
        \item \textbf{Accurate estimation in low-variance regions:}
        $|Q_t(s,a) - Q^*(s,a)| \le \epsilon$ for $(s,a)\notin\mathcal{P}_t$
        with $\sigma^2(s,a) < \sigma^2_0$.
    \end{enumerate}
    This pattern---accurate in low-noise states, poisoned in high-noise states---is
    \emph{structurally distinct} from classical overestimation bias
    ($Q_t \ge Q^*$ uniformly) and from the Deadly Triad
    (divergence of $\|Q_t\|_\infty \to \infty$); see Appendix~\ref{app:poisoning_vs_classical}.
\end{definition}

We argue that this instability is deeply rooted in the entanglement of dual uncertainties within the transit system. Following the taxonomy established by Depeweg et al.~\cite{Depeweg2018Decomposition} and Clements et al.~\cite{Clements2019EstimatingRisk}, we categorize these risks as:
\begin{enumerate}
    \item \textbf{Aleatoric Uncertainty:} The inherent and irreducible randomness of the traffic system, such as stochastic signal timings and bursty passenger surges at stations~\cite{Hullermeier2021AleatoricEpistemic}.
    \item \textbf{Epistemic Uncertainty:} The lack of model knowledge in regions of the state-action space poorly covered by the experience buffer, often exacerbated by the data imbalance inherent in offline or long-tail transit scenarios~\cite{An2021UncertaintyEnsemble}.
\end{enumerate}
In vanilla DRL, these two risks are conflated~\cite{Kendall2017WhatUncertainties}. An agent might over-penalize a state due to inherent aleatoric noise or show dangerous optimism in epistemic-deficient regions due to function approximation errors~\cite{Fujimoto2018TD3}.

To mitigate $Q$-value poisoning, a theoretically rigorous approach is to model the uncertainty distribution using Bayesian methods~\cite{Derman2019BayesianRobust, Depeweg2018Decomposition}. However, maintaining full posterior beliefs (e.g., Dirichlet distributions) for every state-action pair is intractable in high-dimensional continuous control. To circumvent this, we pivot to the framework of Robust Markov Decision Process (MDP)~\cite{Wiesemann2013RobustMDP}, which seeks a policy optimal against a worst-case uncertainty set. Building on the insights of Osogami~\cite{Osogami2012Robustness} and Xu et al.~\cite{Xu2010RobustLasso, Xu2009RobustSVM}, we treat this robust objective as mathematically equivalent to regularization in function space. This allows us to hedge against aleatoric risk by introducing an IPM-based network weight regularization term~\cite{Zhou2023NaturalActorCritic, Panaganti2024RobustF}, acting as a structural prior to suppress aleatoric-induced oscillations~\cite{Asadi2018Lipschitz}. Yet, regularization alone is a uniform prior; it cannot detect the ``unknown unknowns.'' Therefore, for epistemic risk, we complement this with a diversified $Q$-ensemble~\cite{An2021UncertaintyEnsemble, Osband2016BootstrappedDQN} to penalize data insufficiency.

Building upon our previous work on single-agent robust control for bus fleets~\cite{Zhang2026SingleAgentBus}, this paper introduces the RE-SAC framework.
More broadly, our results suggest that \emph{contraction in regularised reinforcement learning is not a generic property of penalised Bellman operators}, but depends critically on whether the penalty term is fixed or value-function-dependent during each backup step.
Making this structural assumption explicit---and verifying it formally---is essential for a correct theoretical understanding of modern deep RL algorithms.
Our contributions are summarized as follows:
\begin{itemize}
    \item \textbf{Theoretical disentanglement:} We provide a formal motivation for decoupling aleatoric and epistemic risks in transit control, bridging the gap between risk-sensitive RL~\cite{Osogami2012Robustness} and bus service reliability.
    \item \textbf{Structural stability characterisation:} We identify a stability condition for multi-penalty Bellman operators: penalties that are \emph{fixed} per backup step (frozen critic weights, target networks) preserve $\gamma$-contraction, while Q-dependent penalties can break contraction when their effective Lipschitz contribution is too large. Both the positive result and the counterexample are machine-verified in Lean~4 with no \texttt{sorry}; the supplementary artifacts include \texttt{RESAC.\allowbreak lean} and \texttt{RESAC-\allowbreak Counterproof.\allowbreak lean}. The negative result is a sufficient---not universal---failure condition: Q-dependent penalties with sufficiently small Lipschitz constants may still preserve contraction. Within the penalty family we analyse, the frozen-parameter architecture used in practice (DQN, SAC, TD3) is therefore a load-bearing design choice.
    \item \textbf{Analytic robustness:} We incorporate an IPM-based weight regularization~\cite{Zhou2023NaturalActorCritic} as a computationally efficient alternative to conservative smoothing~\cite{Yang2022RORL}, preventing $Q$-value collapse in stochastic environments.
    \item \textbf{Empirical stability:} Through extensive simulations in bidirectional corridors, we demonstrate that our approach prevents the underestimation pitfall~\cite{Ji2024SeizingSerendipity} and maintains superior service regularity compared to baseline algorithms including SAC, DSAC, and BAC.
\end{itemize}

\section{Related work}

\subsection{Bus fleet control and reinforcement learning}
Bus holding control has transitioned from classical theoretical models---such as headway-based exponential models~\cite{Daganzo2009Headway} and linear control for schedule reliability~\cite{Xuan2011Dynamic}---to data-driven optimization strategies. In recent years, DRL has emerged as a powerful tool for dynamic holding decisions~\cite{Wang2020DynamicHolding}. While many studies explore Multi-Agent Reinforcement Learning (MARL) to handle the coupling between vehicles, these systems often struggle with non-stationarity and the high computational cost of coordination~\cite{Gronauer2022MultiAgent, Papoudakis2019NonStationarity, Vinyals2019StarCraft}. Building upon our previous work, which demonstrated that a single-agent architecture can achieve robust bus fleet control through categorical embeddings~\cite{Zhang2026SingleAgentBus}, this study focuses on the fundamental stability of value estimation in such stochastic environments.

\subsection{Uncertainty quantification in RL: from Bayesian and risk-sensitive to robust methods}
The handling of uncertainty in RL has evolved through two main lineages: Bayesian distributional modeling and risk-sensitive optimization. Bayesian approaches~\cite{Depeweg2018Decomposition, Derman2019BayesianRobust} theoretically resolve epistemic uncertainty by maintaining posteriors over system dynamics. Parallel to this, risk-sensitive RL optimizes tail risk measures like Conditional Value at Risk (CVaR)~\cite{Chow2014CVaR, Chow2015RiskSensitive}, a line of work further advanced by distributional RL~\cite{Bellemare2017Distributional, Dabney2018QuantileRegression}. However, both paradigms face significant hurdles: Bayesian methods struggle with the intractability of posterior updates in high dimensions, while direct risk optimization often incurs prohibitive sampling complexity, as Fei et al.~\cite{Fei2020RiskSensitive} demonstrated that risk-sensitive regret bounds scale significantly worse than standard RL (see Appendix~\ref{app:complexity}). To circumvent these barriers, the field has turned to Robust Markov Decision Processes (RMDPs)~\cite{Wiesemann2013RobustMDP, Iyengar2005RobustDP}, where uncertainty is enclosed within a static set. The unification of these views came when Osogami~\cite{Osogami2012Robustness} linked robust objectives to risk-sensitive MDPs, bridging worst-case optimization and risk aversion. Complementing this, Geist et al.~\cite{Geist2019TheoryRegularized} showed a key insight: regularization in MDPs effectively imposes a \textit{prior distribution} on the policy or value function, so that a regularized update can be interpreted as a quasi-Bayesian posterior estimation (see Appendix~\ref{app:regularization_prior}). This justifies network regularization as a scalable proxy for handling inherent system stochasticity (aleatoric risk).

\subsection{Disentangling aleatoric and epistemic risks}
Relying solely on regularization acts as a uniform prior, effectively smoothing the policy against aleatoric noise but failing to identify regions of data insufficiency (Epistemic Risk). To address this, we adopt the decomposition framework of Clements et al.~\cite{Clements2019EstimatingRisk} and Depeweg et al.~\cite{Depeweg2018Decomposition}. We employ a diversified $Q$-ensemble~\cite{An2021UncertaintyEnsemble, Osband2016BootstrappedDQN} to quantify epistemic uncertainty through predictive variance, penalizing over-optimism in sparsely sampled regions. Simultaneously, we use IPM-based weight regularization~\cite{Zhou2023NaturalActorCritic} to enforce local Lipschitz continuity~\cite{Asadi2018Lipschitz}, providing a tight lower bound against aleatoric perturbations. This dual approach leverages the efficiency of regularization for aleatoric risk and the sensitivity of ensembles for epistemic risk, preventing the value collapse typically observed in high-variance transit environments~\cite{Ji2024SeizingSerendipity}.

\subsection{Stability and the underestimation trap}
Function approximation errors are a well-known source of instability in Actor-Critic methods, typically leading to overestimation bias~\cite{VanHasselt2010DoubleQ, Fujimoto2018TD3}. While SAC utilizes maximum entropy to improve exploration~\cite{Haarnoja2018SAC}, it remains vulnerable to ``value collapse'' when local batches fail to represent the true high-variance distribution. Recent broad studies suggest that excessive pessimism---often used in offline RL~\cite{Yang2022RORL, Kumar2020CQL}---can lead to an ``underestimation trap'' in online settings where the agent ignores high-value trajectories~\cite{Ji2024SeizingSerendipity}. Our work synthesizes these insights to maintain value estimation stability without succumbing to the pathological degradation often seen in vanilla robust implementations.

\section{Preliminaries}

\subsection{Maximum entropy MDP}
We formulate the bus holding problem as an MDP, characterized by the tuple $(\mathcal{S}, \mathcal{A}, P, R, \gamma)$. In the Maximum Entropy Reinforcement Learning framework~\cite{Haarnoja2018SAC}, the agent's objective is to jointly maximize the expected cumulative return and the policy's entropy. This objective is formally expressed as:
\begin{equation}
    J(\pi) = \mathbb{E}_{\pi, P} \left[ \sum_{t=0}^{\infty} \gamma^t \left( R(s_t, a_t) + \alpha \mathcal{H}(\pi(\cdot|s_t)) \right) \right],
\end{equation}
where $\mathcal{H}(\pi(\cdot|s_t))$ denotes the Shannon entropy of the policy, and $\alpha$ is the temperature parameter controlling the exploration-exploitation trade-off. The corresponding soft state value function $V^\pi(s)$ and soft action-value function $Q^\pi(s, a)$ are related by:
\begin{equation}
    V^\pi(s) = \mathbb{E}_{a \sim \pi} [Q^\pi(s, a) - \alpha \log \pi(a|s)].
\end{equation}
The Soft Bellman Operator $\mathcal{T}^\pi$ acts on $Q$-functions as $\mathcal{T}^\pi Q(s, a) = R(s, a) + \gamma \mathbb{E}_{s' \sim P} [V^\pi(s')]$, forming a contraction mapping that guarantees convergence to the optimal soft $Q$-function.

\subsection{Robust Markov decision process}
A Robust MDP (RMDP) generalizes the standard MDP by replacing the fixed transition kernel $P$ with an uncertain set of transition probabilities $\mathcal{P}_{s,a}$ for each state-action pair $(s,a)$~\cite{Iyengar2005RobustDP, Wiesemann2013RobustMDP}. The agent seeks a policy that maximizes the worst-case expected return over this set. The Robust Bellman Operator $\mathcal{T}^R$ is defined as:
\begin{equation}
    \mathcal{T}^R V(s) = \max_{\pi} \mathbb{E}_{a \sim \pi} \left[ R(s, a) + \gamma \inf_{p \in \mathcal{P}_{s,a}} \mathbb{E}_{s' \sim p} [V(s')] \right].
\end{equation}
Iyengar~\cite{Iyengar2005RobustDP} showed that if the uncertainty set satisfies the rectangularity property (i.e., the uncertainty at each state-action pair is independent), then $\mathcal{T}^R$ is a $\gamma$-contraction in the $L_\infty$ norm, guaranteeing convergence to a unique robust value function.

\section{Methodology}

\subsection{Theoretical formulation: Robustness via regularization}
Building on the RMDP formalism introduced in Section~3.2, a key challenge is defining a meaningful uncertainty set $\mathcal{P}_{s,a}$. Traditional approaches using KL-divergence balls often yield overly conservative policies. Following recent advances~\cite{Zhou2023NaturalActorCritic}, we employ the IPM to define the uncertainty set. The IPM distance between a nominal distribution $p^\circ$ and a perturbed distribution $q$ is defined with respect to a function class $\mathcal{F}$:
\begin{equation}
    d_{\mathcal{F}}(p^\circ, q) = \sup_{f \in \mathcal{F}} \left| \mathbb{E}_{s' \sim p^\circ}[f(s')] - \mathbb{E}_{s' \sim q}[f(s')] \right|.
\end{equation}
This formulation allows us to bypass the complexity of finding the worst-case distribution $p$ by directly penalizing the function space complexity, akin to the correspondence between robustness and regularization shown in support vector machines~\cite{Xu2009RobustSVM} (see Appendix~\ref{app:equivalence}). When $\mathcal{F}$ is the set of 1-Lipschitz functions, $d_{\mathcal{F}}$ coincides with the 1-Wasserstein distance.

Leveraging the convex duality of IPMs, the inner minimization problem in the Robust Bellman Operator admits a tractable closed-form lower bound. For a value function $V_\theta$ parameterized by a neural network, if we constrain the uncertainty set to be a ball $\mathcal{P}_{s,a} = \{ q \mid d_{\mathcal{F}}(q, p^\circ) \le \delta \}$, the robust update satisfies:
\begin{equation}
    \inf_{q \in \mathcal{P}_{s,a}} \mathbb{E}_{q} [V_\theta(s')] \ge \mathbb{E}_{p^\circ} [V_\theta(s')] - \delta \cdot \text{Lip}(V_\theta),
    \label{eq:robust_lower_bound}
\end{equation}
where $\text{Lip}(V_\theta)$ is the Lipschitz constant of the value network. Since the Lipschitz constant of a deep network is upper-bounded by the product of the spectral norms of its weight matrices, i.e., $\text{Lip}(V_\theta) \le \prod_l \|W_l\|_2$, robustness against aleatoric perturbations can be achieved by regularizing the critic network weights. In practice, we use the $\ell_1$-norm as a computationally efficient surrogate. For the $L_\infty$ perturbation model relevant to bounded traffic states, the per-layer sensitivity satisfies $\|W_l \delta\|_\infty \le \|W_l\|_{\infty\to\infty}\|\delta\|_\infty \le \|\text{vec}(W_l)\|_1 \|\delta\|_\infty$, so penalizing $\sum_l \|W_l\|_1$ directly controls the worst-case $L_\infty$ sensitivity of the network (see Appendix~\ref{app:contraction} for a machine-verified contraction proof in finite state-action spaces, with the Lean 4 files \texttt{RESAC.lean} establishing contractivity and \texttt{RESAC-Counterproof.lean} establishing an explicit non-contractive Q-dependent counterexample, \texttt{SoftBellman.lean} extending the result to the entropy-regularized (LogSumExp) operator used in SAC, \texttt{ApproxContraction.lean} providing formal error bounds for function approximation and mini-batch noise, plus a continuous-space extension via the Banach Fixed-Point Theorem).

\subsection{Disentangling epistemic and aleatoric risks}
The fundamental challenge in bus fleet control is that the transition kernel $P(s'|s,a)$ is never truly "nominal." Even with massive historical data, the system faces a dual-layer uncertainty: (i) the inherent traffic stochasticity that exists even in high-data regimes, and (ii) the lack of samples for extreme bunching scenarios or rare traffic incidents. Following the taxonomy of~\cite{Clements2019EstimatingRisk, Depeweg2018Decomposition}, we define the total uncertainty set $\mathcal{P}_{s,a}$ as a composition of Aleatoric and Epistemic components.

In vanilla robust RL, a single uncertainty budget $\delta$ is typically used to cover all deviations. However, as observed in our experiments (the catastrophic collapse of the Epistemic-Only ablation, cf.\ Fig.~\ref{fig:all_experiments}), a monolithic penalty often leads to pathological underestimation~\cite{Ji2024SeizingSerendipity}. This conflation creates a pathological asymmetry: the agent becomes excessively pessimistic regarding well-explored but stochastic transitions (mistaking inherent noise for uncertainty) while remaining overly optimistic about unvisited states (due to function approximation bias). High environmental stochasticity exacerbates this misalignment, driving the policy to abandon optimal, high-variance behaviors in favor of effectively random exploration, ultimately triggering value collapse.

To prevent this, we reformulate the Robust Bellman Operator $\mathcal{T}^R$ by decomposing the worst-case expected value into two distinct penalty terms (we call this the \textit{Robust-Ensemble Value} (REV) operator):
\begin{equation}
    \mathcal{T}^{REV}(s, a) = R(s, a) + \gamma \left( \mathbb{E}_{s' \sim p^\circ} [V(s')] - \underbrace{\lambda_{epi} \cdot \Gamma_{epi}(s, a)}_{\text{Epistemic Penalty}} - \underbrace{\lambda_{ale} \cdot \Gamma_{ale}(V)}_{\text{Aleatoric Penalty}} \right),
\end{equation}
where $p^\circ$ is the nominal transition estimated from the experience buffer, and $\lambda_{epi}, \lambda_{ale} > 0$ are coefficients scaling the two types of risk.

\textbf{1. Epistemic risk ($\Gamma_{epi}$):} This term accounts for the uncertainty arising from the finite nature of the replay buffer $\mathcal{D}$. In regions where data is sparse (e.g., severe headway deviations), the variance of the value estimate is high. By penalizing this variance, we enforce a "pessimism in the face of the unknown"~\cite{An2021UncertaintyEnsemble}, ensuring the agent remains within the support of the training data.

\textbf{2. Aleatoric risk ($\Gamma_{ale}$):} This term accounts for the irreducible noise in the environment (e.g., stochastic passenger arrivals). Based on the IPM duality proven in~\cite{Zhou2023NaturalActorCritic}, this risk is controlled by the Lipschitz constant of the value function:
\begin{equation}
    \label{eq:gamma_ale}
    \Gamma_{ale}(V_\theta) \;=\; \varepsilon \cdot \mathrm{Lip}(V_\theta)
    \;\le\; \varepsilon \prod_{l=1}^L \|W_l\|_2
    \;\le\; \varepsilon \sum_{l=1}^L \|W_l\|_1,
\end{equation}
where $V_\theta$ is parameterized by the current critic weights $\theta$ and $\varepsilon$ is the aleatoric uncertainty radius. A non-smooth $V$ with large Lipschitz constants is highly sensitive to aleatoric perturbations. As established in Eq.~\eqref{eq:robust_lower_bound}, bounding $\mathrm{Lip}(V_\theta)$ via weight norm regularization ($\|W\|_1$) provides structural robustness to inherent noise. Crucially, during each Bellman backup the weights $\theta$ are held fixed (updated only afterward via gradient descent), so the aleatoric penalty collapses to a fixed scalar $\kappa = \lambda_{ale}\sum_l \|W_l^{(\theta)}\|_1$ that is independent of the Q-function being evaluated (see Appendix~\ref{app:contraction}).

According to the correspondence theory of Osogami~\cite{Osogami2012Robustness} (detailed in Appendix~\ref{app:equivalence}), this additive decomposition corresponds to an uncertainty set $\mathcal{P}_{s,a}$ that is the intersection of a data-density-based set and a function-class-based IPM set. This disentanglement allows the agent to maintain high $Q$-values in noisy but well-explored states, thereby avoiding the underestimation trap while maintaining structural robustness.

\textbf{Aleatoric channel: robustness proxy, not direct uncertainty estimate.} We emphasise that the global weight-norm $\kappa=\lambda_{ale}\sum_l\|W_l\|_1$ is not a statistical estimator of the environmental aleatoric noise level. Rather, it is a \emph{critic-smoothness prior} that controls the Lipschitz constant of $V_\theta$ and therefore the worst-case value drift under bounded input perturbations~\cite{Asadi2018Lipschitz}; we adopt it as an \emph{aleatoric-robustness proxy} that is cheap to compute and admits the contraction guarantee. The behavioural rationale for treating $\kappa$ as the aleatoric channel and $\Gamma_{epi}$ as the epistemic channel is that they exhibit qualitatively different dynamics: $\Gamma_{epi}\to 0$ asymptotically as data coverage grows (well-explored regions have low ensemble disagreement), while $\kappa$ does not necessarily vanish with data and instead reflects a function-class regularity prior chosen by the practitioner. Appendix~\ref{app:disentanglement} formalises this separation under our assumptions; we do not claim that $\kappa$ literally equals or estimates the environmental noise variance.

\subsection{Aleatoric risk mitigation via IPM regularization}
The aleatoric risk in bus transit arises from the inherent stochasticity of the environment, such as random traffic delays and fluctuating passenger demand. As established in Eq.~\eqref{eq:robust_lower_bound}, this risk can be mitigated by bounding the Lipschitz constant of the value network $\text{Lip}(V_\theta)$. In practice, we use the $\ell_1$-norm of the critic network weights as a tractable proxy, since $\ell_1$-regularization promotes sparse, smooth weight structures that bound the network's sensitivity to input perturbations~\cite{Xu2010RobustLasso}. The resulting robust update for the $Q$-function is:
\begin{equation}
    \mathcal{T}_{ale} Q(s, a) \ge R(s, a) + \gamma \left( \mathbb{E}_{s' \sim p^\circ} [V(s')] - \lambda_{ale} \sum_{l=1}^L \|W_l\|_1 \right),\footnotemark
\end{equation}
where $\lambda_{ale}$ is the aleatoric uncertainty budget and $\|W_l\|_1$ is the $\ell_1$-norm of the weights in layer $l$.

\textbf{Implementation logic in the Bellman target:}
While the robust operator $\mathcal{T}_{ale}$ subtracts the norm term to obtain a pessimistic lower bound, the implementation keeps this penalty on the \emph{target} side of the Bellman update.  For each frozen target critic head $k$, define
\begin{equation}
    \kappa_k = \sum_{l=1}^L \|W_l^{(\phi'_k)}\|_1,
\end{equation}
and subtract $\lambda_{ale}\kappa_k$ in the bootstrap target (Eq.~\eqref{eq:target_y}).  The norm is computed from the frozen target network, so no gradient flows through this term when updating the online critic.  This keeps the Bellman backup aligned with the IPM lower bound while preserving the fixed-penalty structure used in the contraction proof. Unlike RORL~\cite{Yang2022RORL}, which requires an expensive inner-loop maximization over state perturbations, our approach obtains robustness analytically via a single fixed target shift.
\footnotetext{This inequality defines the \emph{robust lower bound} derived from the IPM duality. In the implementation (Eq.~\eqref{eq:target_y}), the aleatoric penalty is a fixed target-network shift rather than a gradient-flowing $\ell_1$ penalty on the online critic. The contraction proof (Appendix~\ref{app:contraction}) uses exactly this fixed-penalty convention.}

\subsection{Epistemic risk mitigation via $Q$-ensemble}
While the IPM-based regularization described in the previous section ensures robustness against inherent environmental noise (aleatoric risk), it is insufficient for handling the distribution shifts inherent in offline-to-online reinforcement learning. In bus transit scenarios, certain states---such as extreme headway deviations or unusual passenger surges---are sparsely represented in the historical data $\mathcal{D}$. In these regions, the value function $V_\phi$ may produce over-optimistic estimates due to function approximation artifacts rather than true expected rewards. This "lack of knowledge" constitutes the epistemic risk.

To quantify and mitigate this risk, we adopt an ensemble-based pessimism approach~\cite{Osband2016BootstrappedDQN, An2021UncertaintyEnsemble}. We maintain a set of $K$ independent $Q$-networks $\{Q_{\phi_k}\}_{k=1}^K$, each initialized randomly. Following the principle that diversified models will agree in high-density data regions but diverge in unvisited regions~\cite{An2021UncertaintyEnsemble}, the variance across the ensemble serves as a proxy for epistemic uncertainty. The epistemic penalty term is defined as:
\begin{equation}
    \label{eq:gamma_epi}
    \Gamma_{epi}(s, a) = \text{Var}(\{Q_{\phi_k}(s, a)\}_{k=1}^K) = \frac{1}{K-1} \sum_{k=1}^K (Q_{\phi_k}(s, a) - \bar{Q}(s, a))^2,
\end{equation}
where $\bar{Q}(s, a)$ is the mean prediction of the ensemble.
\textbf{Crucially, $\Gamma_{epi}$ is computed from the frozen \emph{target} networks $\hat{Q}_{\phi'_k}$, not the online networks being updated}.  It is therefore a fixed function of $(s,a)$ during each Bellman backup, independent of the Q-function being evaluated---exactly the structural property required for contraction (Appendix~\ref{app:scope}, S3).

In the robust Bellman update, we incorporate this variance to realize "pessimism in the face of the unknown":
\begin{equation}
    \mathcal{T}_{epi} Q(s, a) = R(s, a) + \gamma \mathbb{E}_{s' \sim p^\circ} [ \bar{Q}(s', a') ] - \gamma\,\lambda_{epi} \Gamma_{epi}(s, a).
\end{equation}
By subtracting the variance, the agent is discouraged from selecting actions that lead to states where the ensemble's predictions are inconsistent, effectively constraining the policy to remain within the "trusted" regions of the state space.

\textbf{Notation: $\Gamma_{epi}(\cdot,\cdot)$ at $(s,a)$ vs.\ $(s',a')$.} The same function $\Gamma_{epi}$ defined in Eq.~\eqref{eq:gamma_epi} is evaluated at whichever state-action pair is being valued. The conceptual operator $\mathcal{T}^{REV}(s,a)$ in Eq.~(6) writes the penalty as $\Gamma_{epi}(s,a)$ because we are characterising the value at the input $(s,a)$ in the abstract robust-MDP framework. In the actual Bellman-target implementation (Eq.~\eqref{eq:target_y}), the bootstrap value is evaluated at the \emph{next} state-action $(s',a')$, and the corresponding pessimism penalty is therefore $\Gamma_{epi}(s',a')$ as in Eq.~\eqref{eq:target_y}. The two appearances are the same function; only the evaluation point changes.

\textbf{Synergy between disentangled uncertainties:}
The synergy between the $Q$-ensemble and IPM regularization is the key to preventing $Q$-value poisoning. As shown in our experiments, using the ensemble alone often fails in high-variance transit environments because it treats aleatoric noise as epistemic gaps, leading to excessive pessimism in well-explored but noisy states. Conversely, using only weight regularization fails to detect OOD regions. By disentangling these two, the ensemble captures the data-density-based risk, while the IPM regularization handles the local-perturbation-based risk. This combined defense ensures that the $Q$-value remains stable and accurately reflects the worst-case scenario without falling into the underestimation trap identified by~\cite{Ji2024SeizingSerendipity}. A formal condition on the penalty coefficients ($\lambda_{ale}$, $\lambda_{epi}$) guaranteeing \emph{policy non-degeneration}---that the robust fixed point $Q^*_{robust}$ remains bounded away from collapse so the induced softmax policy retains gradient signal, even in domains such as bus control where rewards are uniformly negative---is given in Appendix~\ref{app:nondegen}. We deliberately avoid claiming $Q^*_{robust}>0$, which would be inconsistent with reward-negative domains.

\subsection{RE-SAC algorithm implementation}
The RE-SAC algorithm integrates the disentangled uncertainty framework into the Soft Actor-Critic (SAC) architecture~\cite{Haarnoja2018SAC}. To handle the high-dimensional state space and stochastic nature of bus corridors, we maintain an ensemble of $K$ $Q$-networks $\{Q_{\phi_k}\}_{k=1}^K$ and a policy network $\pi_\theta$.

\subsubsection{Critic objective with disentangled penalties}
The core of RE-SAC lies in the robust target calculation. For each transition $(s, a, r, s')$ sampled from the replay buffer $\mathcal{D}$, each ensemble member $k$ computes its own independent target using its corresponding target network:
\begin{equation}
    y_k = r + \gamma \left( \mathbb{E}_{a' \sim \pi_\theta} \left[ \hat{Q}_{\phi'_k}(s', a') - \lambda_{epi} \cdot \Gamma_{epi}(s', a') - \lambda_{ale}\kappa_k \right] - \alpha \log \pi_\theta(a'|s') \right),
    \label{eq:target_y}
\end{equation}
where $\hat{Q}_{\phi'_k}$ denotes the $k$-th target $Q$-network, $\Gamma_{epi}(s', a')$ is the epistemic penalty (ensemble variance, Eq.~\eqref{eq:gamma_epi}), and $\kappa_k=\sum_l\|W_l^{(\phi'_k)}\|_1$ is the frozen target-head norm used for the IPM aleatoric shift.
Unlike the clipped double-Q approach~\cite{Fujimoto2018TD3} which computes a shared pessimistic target via $\min_k \hat{Q}_{\phi'_k}$, independent targets allow each ensemble member to maintain its own value estimate, preserving the diversity that is essential for meaningful epistemic uncertainty quantification via $\sigma_{ens}$.
Aleatoric risk is therefore handled by a fixed pessimistic shift in the Bellman target, computed from the target networks and detached from the online critic update. This is the convention used by the released implementation and by the contraction proof.
\begin{equation}
    \Gamma_{epi}(s', a') = \frac{1}{K-1} \sum_{k=1}^K \left( \hat{Q}_{\phi'_k}(s', a') - \frac{1}{K} \sum_{j=1}^K \hat{Q}_{\phi'_j}(s', a') \right)^2.
\end{equation}

The online critic then fits this robust target. To penalize large disagreements among ensemble members in out-of-distribution regions, we add the ensemble's cross-critic standard deviation as an OOD penalty. The loss function for each $Q$-network $\phi_k$ is:
\begin{equation}
    \mathcal{L}_Q(\phi_k) = \mathbb{E}_{(s,a) \sim \mathcal{D}} \left[ \left( Q_{\phi_k}(s, a) - y_k \right)^2 \right] + \beta_{ood} \cdot \sigma_{ens}(s, a),
\end{equation}
where $\sigma_{ens}(s, a) = \text{Std}(\{Q_{\phi_k}(s, a)\}_{k=1}^K)$ penalizes large ensemble disagreement during training (see Appendix~\ref{app:contraction} for a contraction proof and Appendix~\ref{app:sac_vs_resac} for a theoretical comparison with SAC's robustness). The $\ell_1$ norm still enters the algorithm, but through the frozen target shift $\lambda_{ale}\kappa_k$ in Eq.~\eqref{eq:target_y}, not as a direct gradient penalty in $\mathcal{L}_Q$.

\begin{remark}[Scope of the contraction guarantee]
    The contraction result applies to the operator under the frozen-parameter interpretation (target networks for $\Gamma_{epi}$, held-fixed critic weights for $\kappa$) and does not imply convergence of the full deep-RL learning dynamics. Contraction is preserved when penalties are fixed per backup; our counterexample shows a specific Q-dependent penalty for which contraction provably fails (\texttt{RESAC-Counterproof.lean})---the result is a sufficient failure condition, not a universal impossibility, and Q-dependent penalties with sufficiently small Lipschitz constants may still contract. We analyze a surrogate operator using an upper bound on the original aleatoric penalty (cf.\ Appendix~\ref{app:scope}, S2); this is conservative but does not affect the existence or uniqueness of the fixed point.
\end{remark}

\subsubsection{Actor objective and categorical embeddings}
The policy network $\pi_\theta$ is updated to maximize the expected robust value. To promote pessimism in epistemic-uncertain regions, the actor uses a Lower Confidence Bound (LCB) objective, penalizing actions with high ensemble disagreement:
\begin{equation}
    \mathcal{L}_\pi(\theta) = \mathbb{E}_{s \sim \mathcal{D}, a \sim \pi_\theta} \left[ \alpha \log \pi_\theta(a|s) - \left( \bar{Q}(s, a) + \beta_{lcb} \cdot \sigma_{ens}(s, a) \right) \right],
    \quad \beta_{lcb} \le 0,
\label{eq:actor_loss}
\end{equation}
where $\bar{Q}(s, a) = \frac{1}{K} \sum_{k=1}^K Q_{\phi_k}(s, a)$ is the ensemble mean and $\sigma_{ens}(s, a)$ is the ensemble standard deviation. \textbf{Sign convention.} We minimise $\mathcal{L}_\pi$, so the bracketed term $\bar{Q}+\beta_{lcb}\sigma_{ens}$ is what the policy effectively maximises. Because $\beta_{lcb}\le 0$, the term $\beta_{lcb}\sigma_{ens}$ subtracts from $\bar{Q}$ (pessimism): high ensemble disagreement reduces the action's effective value, so the gradient flows the policy toward low-$\sigma$ (well-explored) actions. The entropy term $\alpha\log\pi_\theta$ prevents collapse onto the underestimation trap identified by Ji et al.~\cite{Ji2024SeizingSerendipity}. (A positive $\beta_{lcb}$ would invert this and reward high-uncertainty actions; we therefore enforce $\beta_{lcb}\le 0$ throughout.)
Following our previous work~\cite{Zhang2026SingleAgentBus}, we utilize categorical embeddings to represent discrete transit features. The state $s$ is augmented as:
\begin{equation}
    s = [s_{cont} \oplus \text{Emb}(\text{StopID}) \oplus \text{Emb}(\text{BusID})],
\end{equation}
where $\oplus$ denotes the concatenation of continuous states and $d$-dimensional embedding vectors. This representation is vital for epistemic uncertainty estimation: since different StopIDs and BusIDs have varying sampling frequencies in $\mathcal{D}$, the ensemble variance $\Gamma_{epi}$ naturally scales with the data density of specific categorical entities.

\subsubsection{Algorithm pseudocode}
The complete RE-SAC training procedure is summarized in Algorithm~\ref{alg:resac}.

\begin{algorithm}[H]
\caption{Robust-Ensemble Soft Actor-Critic (RE-SAC)}
\label{alg:resac}
\begin{algorithmic}
\STATE \textbf{Initialize:} Ensemble $Q$-networks $\{Q_{\phi_1}, \dots, Q_{\phi_K}\}$, Actor $\pi_\theta$, Replay Buffer $\mathcal{D}$.
\STATE \textbf{Initialize:} Target networks $\phi'_k \leftarrow \phi_k$, coefficients $\lambda_{ale}, \lambda_{epi}$.
\FOR{each training episode}
    \FOR{each environment step $t$}
        \STATE Observe $s_t$ (including Categorical IDs); Sample $a_t \sim \pi_\theta(\cdot|s_t)$.
        \STATE Execute $a_t$, store $(s_t, a_t, r_t, s_{t+1})$ in $\mathcal{D}$.
        \STATE Sample mini-batch from $\mathcal{D}$.
        \STATE \textit{// Critic Update with Disentangled Risks}
        \STATE Compute $a' \sim \pi_\theta(\cdot|s_{t+1})$.
        \STATE $\Gamma_{epi} = \text{Var}(\{Q_{\phi'_k}(s_{t+1}, a')\})$.
        \FOR{$k=1 \dots K$}
            \STATE Compute frozen target norm $\kappa_k=\sum_l\|W_l^{(\phi'_k)}\|_1$.
            \STATE $y_k = r_t + \gamma \left( Q_{\phi'_k}(s_{t+1}, a') - \lambda_{epi} \Gamma_{epi} - \lambda_{ale}\kappa_k - \alpha \log \pi_\theta(a'|s_{t+1}) \right)$. \hfill \textit{// independent robust target}
            \STATE Update $\phi_k$ by minimizing $\mathcal{L}_Q(\phi_k)$ with OOD disagreement penalty $\beta_{ood} \sigma_{ens}$.
        \ENDFOR
        \STATE \textit{// Actor and Temperature Update}
        \STATE Update $\theta$ by minimizing $\mathcal{L}_\pi(\theta)$ using LCB: $\bar{Q} + \beta_{lcb} \cdot \sigma_{ens}$.
        \STATE Update temperature $\alpha$ to maintain target entropy $\mathcal{H}_{target}$.
        \STATE \textit{// Soft Target Sync}
        \STATE $\phi'_k \leftarrow \tau \phi_k + (1-\tau) \phi'_k$.
    \ENDFOR
\ENDFOR
\end{algorithmic}
\end{algorithm}

\section{Experiments}
In this section, we empirically validate RE-SAC on two complementary testbeds: (i) standard continuous control benchmarks (MuJoCo via Brax), which isolate the algorithmic effect on Q-value estimation accuracy, and (ii) a high-fidelity bus fleet simulation, which tests robustness under real-world stochasticity. We aim to answer the following questions:
\begin{enumerate}[leftmargin=*,itemsep=2pt]
    \item Does the disentangled regularization in RE-SAC produce more accurate Q-value estimates than baselines, particularly in rare, out-of-distribution (OOD) states?
    \item Can RE-SAC outperform baseline DRL algorithms in cumulative reward and service regularity?
    \item Does the disentanglement of epistemic and aleatoric risks prevent catastrophic Q-value collapse?
\end{enumerate}

\subsection{Standard continuous control benchmarks}
\label{sec:mujoco}

\subsubsection{Setup}
\label{sec:mujoco_setup}
We evaluate RE-SAC alongside seven baselines on four MuJoCo continuous control tasks simulated via the Brax physics engine~\cite{Freeman2021Brax} with the \texttt{spring} backend: \textbf{Hopper-v2}, \textbf{HalfCheetah-v2}, \textbf{Walker2d-v2}, and \textbf{Ant-v2}. All algorithms share a common architecture (2-layer MLP with 256 hidden units, $\gamma = 0.99$, $\tau = 0.005$, learning rate $3 \times 10^{-4}$, batch size 256, replay buffer $10^6$) and are trained for 8M environment steps (2000 iterations $\times$ 4000 steps/iteration, 250 gradient updates per iteration).

\paragraph{Single primary configuration of RE-SAC (B0).}
To avoid the appearance of per-environment hyperparameter cherry-picking, this paper designates a \emph{single} primary configuration of RE-SAC---labelled \textbf{RE-SAC (B0)}---used across \emph{all} environments and across \emph{all} headline comparisons (Table~\ref{tab:oracle_mae}, the non-stationary table in \S\ref{sec:nonstationary}, and the sensitivity sweep in \S\ref{sec:sensitivity}). RE-SAC (B0) uses: $K{=}10$-head ensemble, independent per-head Bellman targets ($\alpha_{ind}{=}0.75$, blend toward independent), $\beta_{lcb}{=}{-2}$ annealed to $-1$, $\beta_{ood}{=}0.01$, $\lambda_{ale}{=}0.01$, $\tau_{ema}{=}0.005$, $\lambda_{anc}{=}0.1$. This configuration is the one trained over three seeds and reported in all multi-seed tables.

\paragraph{Earlier variants retained only as single-seed reference points.}
Three earlier variants---\textbf{v1} (shared min targets, no anchor), \textbf{v2} (10$\times$ smaller regularisation), \textbf{v5/v5b/v6b} (per-environment-tuned independent targets)---were used in earlier drafts to isolate individual design choices. They are retained in Table~\ref{tab:oracle_mae} (marked $^\dagger$, seed~8 only) purely as ablation reference points. They are \emph{not} the headline method and are not compared against fixed baselines as if they were a tuned method. The headline three-seed numbers always refer to RE-SAC (B0).

\paragraph{Baselines.}
\begin{itemize}[leftmargin=*,itemsep=1pt]
    \item \textbf{SAC}~\cite{Haarnoja2018SAC}: twin-critic ($K{=}2$), auto-tuned entropy.
    \item \textbf{TD3}~\cite{Fujimoto2018TD3}: twin-critic, deterministic policy, delayed updates.
    \item \textbf{DSAC}~\cite{Ma2025DSAC}: twin quantile critics (IQN, 8 quantiles).
    \item \textbf{BAC}~\cite{Ji2024SeizingSerendipity}: BEE-style coupled pessimism baseline.
    \item \textbf{REDQ}, \textbf{SAC-N}, \textbf{TQC}: ensemble-based pessimism baselines added in the post-review-round-1 update.
\end{itemize}
All baselines are evaluated under the same 3-seed protocol as RE-SAC (B0).

\subsubsection{Oracle Q-error analysis}
\label{sec:oracle_q}
The central claim of RE-SAC is that disentangled regularization produces structurally more accurate value estimates, especially in data-sparse regions. To test this directly, we evaluate the \textbf{Oracle Q-Error}: after training, we roll out 50 evaluation episodes per algorithm and record, at every step, (i) the Q-value predictions from all critic heads, and (ii) the Monte Carlo return $Q_{MC}(s_t,a_t) = \sum_{k=0}^{T-t}\gamma^k r_{t+k}$, which serves as a model-free ground truth. Since algorithms learn Q-values at different absolute scales, we apply per-algorithm z-score alignment (rescaling predictions to match the mean and standard deviation of $Q_{MC}$). For ensemble methods, we report the \emph{Oracle MAE}: for each state, the head whose aligned prediction is closest to $Q_{MC}$ is selected, and the MAE is computed over these best-head predictions. This measures the ensemble's representational capacity---how well it \emph{could} estimate the true return if paired with an ideal head-selection mechanism.

We bin the Oracle MAE by \textbf{Mahalanobis rareness}---the Mahalanobis distance of each observation from the training distribution's empirical mean and covariance---to examine how estimation errors vary with state rarity.

\begin{table}[!htbp]
\centering
\caption{Oracle MAE (seed~8, single-seed Q-error analysis) and final evaluation returns (3-seed mean$\pm$std over the last 25\% of evaluations, seeds $\{8,16,24\}$). \textbf{Bold}: best value within the comparable block (Oracle MAE columns for seed-8 Q passes; return columns for 3-seed rows only). RE-SAC v1 (shared min targets, seed~8 only) achieves the lowest Q-estimation error on 3/4 envs; RE-SAC (B0, $\alpha_{ind}{=}0.75$ corrected from sensitivity analysis, 3 seeds) trades per-head accuracy for ensemble diversity. Variants v2, v5 are retained from earlier drafts as single-seed reference points; the headline three-seed RE-SAC number for this paper is the B0 row.}
\label{tab:oracle_mae}
\scriptsize
\begin{adjustbox}{width=\textwidth}
\begin{tabular}{l|cccc|cccc}
\toprule
 & \multicolumn{4}{c|}{Oracle MAE $\downarrow$ (seed 8)} & \multicolumn{4}{c}{Final Eval Return $\uparrow$ (mean$\pm$std, 3 seeds)} \\
Algorithm & Hop. & HCheetah & Walk. & Ant & Hop. & HCheetah & Walk. & Ant \\
\midrule
SAC          & 15.5          & 59.0          & 10.1          & 81.7   & 567$\pm$18   & 14603$\pm$5511   & 335$\pm$14    & 12384$\pm$2976 \\
DSAC         & 16.6          & 45.9          & 10.6          & 107.7  & \textbf{595$\pm$172}  & 12246$\pm$5645   & \textbf{437$\pm$150}   & 7817$\pm$1259 \\
TD3          & \textbf{6.8}  & 47.6          & 10.7          & 61.4   & 518$\pm$44   & 17856$\pm$6695   & 422$\pm$67    & 11592$\pm$1971 \\
BAC          & 13.4          & 42.2          & 7.5           & 86.5   & \textbf{644$\pm$69}   & 11582$\pm$5184   & \textbf{605$\pm$333}   & \textbf{21947$\pm$703} \\
REDQ         & --            & --            & --            & --     & 592$\pm$126  & 16798$\pm$5627   & 364$\pm$63    & 11940$\pm$3115 \\
SACN         & --            & --            & --            & --     & 440$\pm$31   & 17084$\pm$3432   & 301$\pm$11    & 10368$\pm$4218 \\
TQC          & --            & --            & --            & --     & 487$\pm$117  & 11080$\pm$4266   & 230$\pm$170   & 885$\pm$1268 \\
\midrule
RE-SAC v1$^\dagger$    & 7.9   & \textbf{30.2} & \textbf{7.0}  & \textbf{48.1} & 401   & 10618  & 227   & 17398 \\
RE-SAC v2$^\dagger$    & 18.9  & 58.4          & 10.6          & 65.0          & 392   & 15982  & 276   & 3851 \\
RE-SAC (B0)  & 13.0          & 82.1          & 10.7          & 74.3   & 446$\pm$28   & \textbf{18440$\pm$2732}   & 324$\pm$29    & 14938$\pm$8356 \\
RE-SAC v5$^\dagger$ & 15.4   & 78.4          & 14.5$^*$      & 84.6$^{**}$    & 660   & 22711  & 1008$^*$ & 27104$^{**}$ \\
\bottomrule
\multicolumn{9}{l}{\footnotesize $^\dagger$Single seed only (seed~8). $^*$v5b ($\lambda_{anc}{=}0.01$, $\beta_{end}{=}{-0.5}$); $^{**}$v6b ($\alpha_{ind}{=}0.5$). REDQ/SACN/TQC are 3-seed additions for review-round-1; Oracle MAE for these requires a seed-matched Q-data collection pass (future work). RE-SAC v5 single-seed numbers should be interpreted in light of the 3-seed sensitivity sweep in \S\ref{sec:sensitivity}, which shows the underlying v5-style independent-target design has high seed variance on Ant.}
\end{tabular}
\end{adjustbox}
\end{table}

\begin{table}[!htbp]
\centering
\caption{Non-oracle Q-error aggregators averaged over the four MuJoCo environments (seed~8 Q-data, MAE $\downarrow$). Oracle selects the best head post hoc and is an upper bound on representational capacity. Mean, LCB, and Min are deployable aggregators; for RE-SAC, LCB corresponds to the actor's value surrogate $\bar{Q}+\beta_{lcb}\sigma_{ens}$.}
\label{tab:nonoracle_mae}
\small
\begin{tabular}{l|cccc}
\toprule
Algorithm & Oracle & Mean-head & LCB aggregator & Min-head \\
\midrule
SAC          & 41.6 & 43.0 & 43.5 & 43.3 \\
DSAC         & 45.2 & 46.4 & 46.8 & 46.6 \\
TD3          & 31.6 & 33.8 & 34.9 & 34.2 \\
BAC          & 37.4 & 38.6 & 39.1 & 38.9 \\
RE-SAC (B0)  & 45.0 & 49.1 & 50.0 & 49.8 \\
RE-SAC v1$^\dagger$ & \textbf{23.3} & \textbf{25.0} & \textbf{25.8} & \textbf{25.7} \\
\bottomrule
\multicolumn{5}{l}{\footnotesize $^\dagger$Single-seed Q-estimation reference point, not the headline policy configuration.}
\end{tabular}
\end{table}

Table~\ref{tab:oracle_mae} reveals an instructive trade-off. RE-SAC v1 (shared min targets) achieves the lowest seed-8 Oracle MAE on 3 of 4 environments---but it is not the headline policy configuration. \textbf{Notably, RE-SAC v1 also outperforms BAC} (the closest pessimistic ensemble baseline) on the seed-8 Q-estimation pass in every environment: $7.9{<}13.4$ on Hopper, $30.2{<}42.2$ on HalfCheetah, $7.0{<}7.5$ on Walker2d, and $48.1{<}86.5$ on Ant. Table~\ref{tab:nonoracle_mae} addresses the oracle-selection concern directly: the same ordering largely survives when using deployable mean, LCB, or min aggregators, although all non-oracle errors are higher than the best-head upper bound. RE-SAC (B0), the only RE-SAC row trained over three seeds in this table, is competitive but not uniformly dominant in final return: it leads HalfCheetah among the 3-seed rows, while BAC remains stronger on stationary Ant and Walker2d. The earlier v5/v6b rows show that independent targets can produce high seed-8 peak returns, but they are retained only as exploratory reference points because the subsequent 3-seed sweep shows high seed sensitivity. The practical takeaway is therefore narrower than in earlier drafts: RE-SAC improves the value-estimation/robustness trade-off, but single-seed peak-return variants should not be read as statistically established wins.

The per-bin analysis in Fig.~\ref{fig:mujoco_oracle} shows that RE-SAC v1 maintains the tightest seed-8 Oracle-error envelope in OOD regions (high Mahalanobis rareness). We use this result as evidence about representational capacity and extrapolation control, not as a deployed-policy performance metric; non-oracle aggregate errors are discussed below.

\textbf{Why does RE-SAC v1 produce better Q-estimates?} The mechanism is twofold. First, the IPM-based weight regularization $\Gamma_{ale}$ (Eq.~8) enforces Lipschitz continuity of the critic, preventing wild extrapolations in unseen state regions---this is the aleatoric channel. Second, the ensemble variance penalty $\Gamma_{epi}$ (Eq.~12) suppresses overconfident predictions precisely where critic heads disagree---this is the epistemic channel. Critically, the two penalties are \emph{disentangled}: reducing the regularization strength (RE-SAC v2, with 10$\times$ smaller $\beta_{ood}$ and $\lambda_{ale}$) degrades Oracle MAE to near-SAC levels, supporting the view that both regularization channels materially contribute to the Q-estimation gains.

\textbf{Why do independent-target variants produce higher seed-8 peaks despite worse Q-estimates?} The independent target design ensures each Q-head sees its own Bellman target, maintaining genuine ensemble diversity. The resulting higher $\sigma_{ens}$ makes the LCB mechanism active: $\beta_{lcb} \cdot \sigma_{ens}$ provides meaningful pessimism in uncertain regions and relaxation in well-explored regions. The EMA policy, anchor penalty, and performance-aware $\beta$ scheduling then stabilise this higher-variance training process. However, the multi-seed sensitivity study shows that this design also increases seed variance, so we treat v5/v6b as ablations rather than the main method.

\begin{figure}[!htbp]
    \centering
    \includegraphics[width=\linewidth]{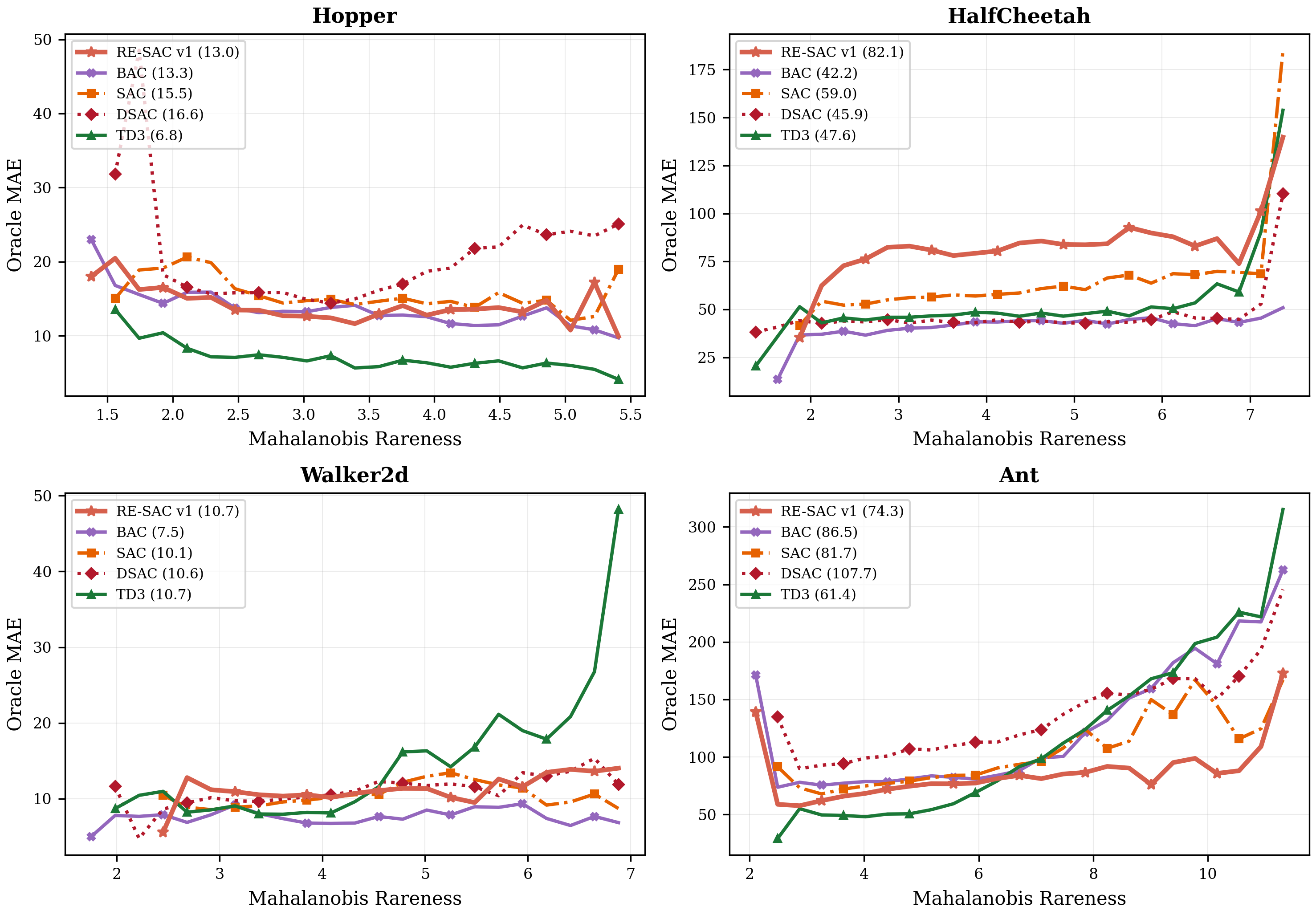}
    \caption{Oracle Q-Error (MAE) binned by Mahalanobis rareness on four MuJoCo environments. Legend entries show overall Oracle MAE. RE-SAC v1 (coral $\star$) maintains the lowest oracle Q-estimation error across rareness bins on three of four environments, with BAC (purple $\times$) the closest competitor on Ant. The figure focuses on the five algorithms used in the paper's main table; older RE-SAC variants (v2/v5/v6b) remain available in the analysis pipeline but are excluded for legend readability.}
    \label{fig:mujoco_oracle}
\end{figure}

\subsubsection{Training performance}
Fig.~\ref{fig:mujoco_curves} shows seed-8 evaluation curves for the main baselines and the exploratory RE-SAC variants that led to B0. The figure is useful for diagnosing the mechanism---independent targets preserve ensemble diversity and activate the LCB term---but the single-seed curves are not used as headline statistical evidence. The statistically comparable returns are the 3-seed columns of Table~\ref{tab:oracle_mae}, where RE-SAC (B0) is strongest on HalfCheetah, competitive on Ant with high variance, and weaker than BAC on Hopper/Walker2d. This narrower reading avoids treating per-environment-tuned seed-8 variants as the main algorithm.

\begin{figure}[!htbp]
    \centering
    \includegraphics[width=\linewidth]{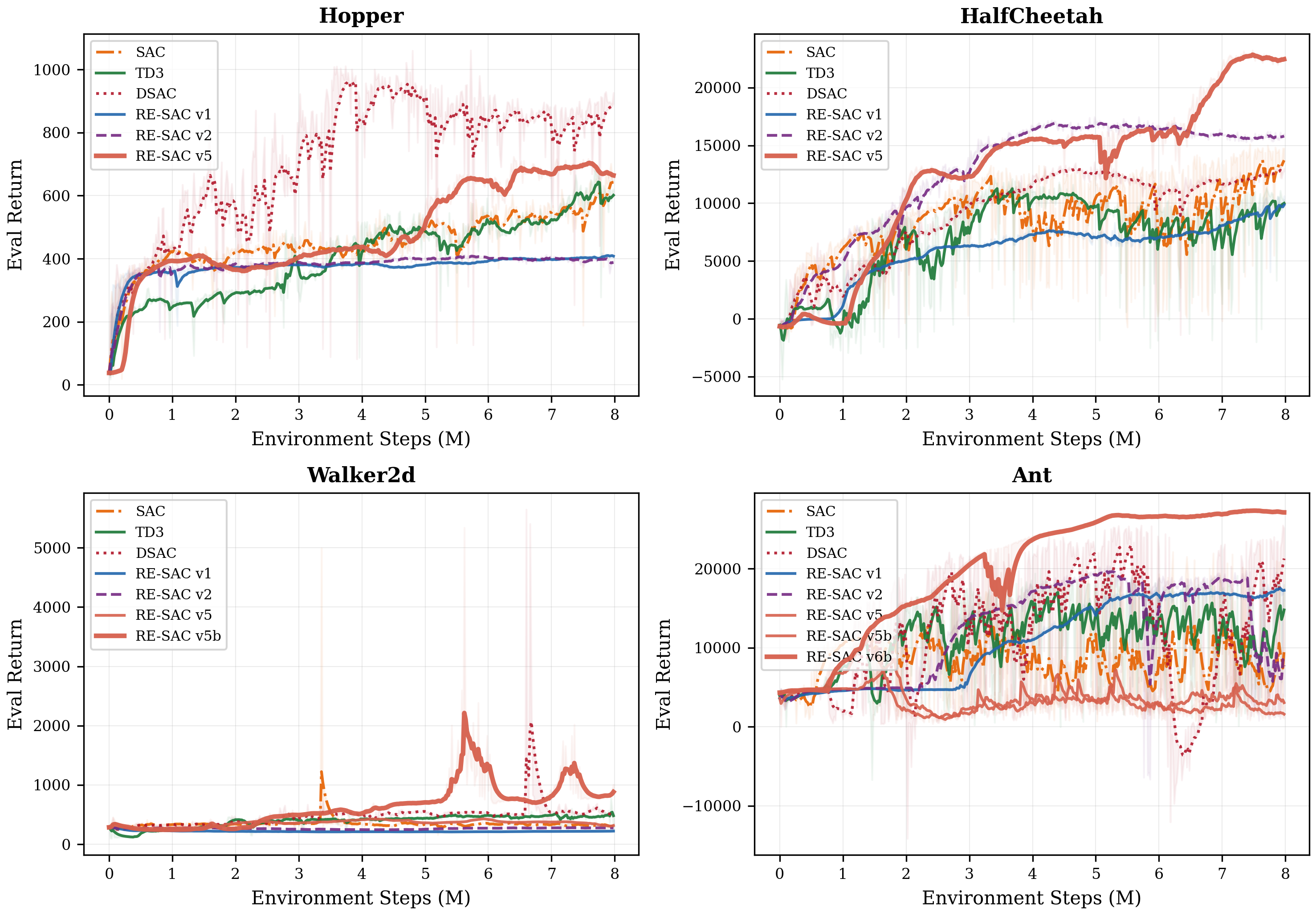}
    \caption{Seed-8 evaluation returns on MuJoCo benchmarks (8M steps). Curves diagnose how independent critic targets and LCB scheduling affect training, but headline comparisons use the 3-seed B0 results in Table~\ref{tab:oracle_mae}.}
    \label{fig:mujoco_curves}
\end{figure}

\subsubsection{Hyperparameter sensitivity}
\label{sec:sensitivity}
To understand which design choices in RE-SAC are load-bearing and which are forgiving, we sweep five hyperparameters one at a time on the two environments where the algorithmic behaviour differs most: Ant-v2 (high-dimensional, ensemble-friendly) and HalfCheetah-v2 (mid-dimensional, more sensitive to pessimism). Unless explicitly swept, hyperparameters are held at the fixed B0 configuration from Section~\ref{sec:mujoco_setup}; each configuration is trained for 8M environment steps across three seeds ($\{8,16,24\}$, post-review-round-1 update). Table~\ref{tab:sensitivity} reports mean$\pm$std of the last 25\% of evaluations.

\begin{table}[!htbp]
\centering
\caption{Sensitivity sweep results (final return, mean$\pm$std over the last 25\% of evaluations and 3 seeds). Each cell reports \textbf{Ant-v2 / HalfCheetah-v2}.}
\label{tab:sensitivity}
\scriptsize
\begin{adjustbox}{width=\textwidth}
\begin{tabular}{l|cccccc}
\toprule
Sweep & \multicolumn{6}{c}{Final return (Ant-v2 / HalfCheetah-v2)} \\
\midrule
$K$ (ensemble size)            & 2                       & 5                        & 10                      & 20                     &  &  \\
                                & diverged / diverged    & 4376$\pm$6200 / 18611$\pm$3642 & 6639$\pm$9416 / 16274$\pm$5659 & 11620$\pm$8917 / 14636$\pm$1914 &  &  \\
\midrule
$\lambda_{anc}$ (anchor)        & 0                       & 0.001                    & 0.01                    & 0.1                    & 0.5                   & 1.0 \\
                                & 8350$\pm$11828 / 16636$\pm$1830 & 6458$\pm$9152 / 16886$\pm$3205 & 9103$\pm$9280 / 16639$\pm$2633 & 7310$\pm$10360 / 13643$\pm$2084 & 4869$\pm$6899 / 16798$\pm$5905 & 20167$\pm$1750 / 13000$\pm$3729 \\
\midrule
$\beta_{end}$ (final LCB)       & $-2.0$                  & $-1.5$                   & $-1.0$                  & $-0.5$                 & 0.0                   &  \\
                                & 8502$\pm$12044 / 16787$\pm$2982 & 7981$\pm$11114 / 17027$\pm$5728 & 3899$\pm$5530 / 17802$\pm$4644 & 8440$\pm$11946 / 16990$\pm$5693 & 12866$\pm$9168 / 16386$\pm$4746 &  \\
\midrule
$\alpha_{ind}$ (indep.\ ratio)  & 0.0                     & 0.25                     & 0.5                     & 0.75                   & 1.0                   &  \\
                                & 15260$\pm$3492 / 6138$\pm$4350 & 10307$\pm$7611 / 8358$\pm$2203 & 8965$\pm$6824 / 11089$\pm$2797 & 7613$\pm$10790 / 13697$\pm$2259 & $-$12$\pm$7 / 15706$\pm$5935 &  \\
\midrule
$\tau_{ema}$ (EMA decay)        & 0.0                     & 0.001                    & 0.005                   & 0.01                   & 0.05                  &  \\
                                & 1301$\pm$1851 / 8279$\pm$6379 & 4953$\pm$7014 / 8369$\pm$1749 & 5341$\pm$7576 / 16530$\pm$3567 & 7764$\pm$10990 / 14500$\pm$57 & 11883$\pm$9261 / 15918$\pm$521 &  \\
\bottomrule
\end{tabular}
\end{adjustbox}
\end{table}

\paragraph{Honest statistical reading.} The 3-seed re-run substantially changes the conclusions one would draw from a single seed (seed~8 alone). On Ant-v2, almost every configuration has a per-seed standard deviation comparable to its mean: seed~8 frequently returns $20{,}000+$ while seeds~16 and~24 collapse to near-zero. This pattern persists across the entire sweep and is not a property of any one hyperparameter. We interpret it as evidence that the independent-target RE-SAC design has a wide \emph{basin-of-attraction} sensitivity to initialisation on Ant, including in the B0 neighbourhood and in per-environment-best configurations: roughly one in three seeds enters a Q-value-collapse regime within the 8M-step horizon. On HalfCheetah the seed effect is small ($\pm$2{,}000--5{,}000 around a mean of $\sim$15{,}000) and conclusions about hyperparameter ranges remain sharper.

\paragraph{Findings that survive multi-seed scrutiny.}
\textbf{(i) $K{=}2$ diverges on both envs.} The two-head ensemble fails to provide a meaningful epistemic signal: training reaches NaN before iteration~200 on every seed and every environment. $K{\geq}5$ is therefore a hard requirement.

\textbf{(ii) $\tau_{ema}{=}0$ collapses HalfCheetah training.} Without an EMA-averaged evaluation actor, HalfCheetah falls to $8{,}279$ (vs.\ $\sim$15{,}500 with positive $\tau_{ema}$). On Ant the EMA effect is masked by the larger seed variance.

\textbf{(iii) $\alpha_{ind}{=}1.0$ collapses Ant on every seed (return $\approx 0$).} The pure-independent target is the only configuration that fails Ant deterministically across all three seeds; any $\alpha_{ind}{<}1.0$ at least sometimes trains. This refines (rather than overturns) the earlier seed-8-only finding that $\alpha_{ind}{=}0.75$ is best.

\textbf{(iv) $\lambda_{anc}{=}1.0$ is the most seed-stable Ant configuration} (return $20167\pm1750$---the only sub-10\% relative std in the Ant column). At lower $\lambda_{anc}$ values the algorithm has higher peak performance per seed but much higher seed variance. The anchor penalty appears to act primarily as a seed-variance regulariser on Ant, not as a marginal performance booster. This contradicts our seed-8-only earlier reading that $\lambda_{anc}$ is redundant.

\textbf{(v) Single-seed conclusions about $\beta_{end}$ do not survive multi-seed.} The earlier finding that ``Ant favours strongly pessimistic settings ($\beta_{end}{=}{-2.0}$)'' rested on a single seed in which $\beta_{end}{=}{-2.0}$ happened to land in the stable basin. With three seeds, all $\beta_{end}$ values produce Ant returns within $1\sigma$ of each other; the $\beta_{end}$ choice is dominated by seed variance.

\paragraph{Implications.} The single-seed sensitivity table in earlier drafts of this paper overstated the precision of several findings. The remaining robust prescriptions for the B0 neighbourhood are: $K{\geq}5$, $\tau_{ema}{>}0$, $\alpha_{ind}{<}1.0$, and---for Ant in particular---a stronger anchor penalty when seed stability is more important than peak seed-8 return. Recommendations on $\beta_{end}$ should be deferred until the seed-variance source is understood (likely a value-collapse instability in the independent-target Bellman backup), which we identify as the primary follow-up to this work.

\subsubsection{Robustness under non-stationary dynamics}
\label{sec:nonstationary}
The aleatoric/epistemic disentanglement that motivates RE-SAC is most relevant when the environment itself changes. To test this, we evaluate eight algorithms (RE-SAC, BAC, SAC, DSAC, TD3, REDQ, SACN, TQC) on a non-stationary variant of each MuJoCo environment in which the gravity scaling factor is resampled every $20{,}000$ environment steps from a log-uniform range of $[10^{-3}, 10^{3}]\times g_{nominal}$, yielding $40$ distinct task instances over the $8\mathrm{M}$-step training horizon. This is a deliberately harsh schedule: each algorithm must adapt to a new gravity regime more than $30$ times during training. RE-SAC uses the fixed B0 configuration; baselines use their fixed configurations chosen before this non-stationary run. All algorithms are trained over three seeds ($\{8,16,24\}$). For each run we report two complementary final-return metrics: (i) the \emph{mean} return averaged over all $40$ evaluation tasks, and (ii) the \emph{worst-quartile} (worst-Q) return---the mean over the bottom-$25\%$ task means, which directly probes worst-case robustness across regime shifts.

\begin{table}[!htbp]
\centering
\caption{Non-stationary final return: mean $\pm$ std (worst-quartile) across $40$ task instances and 3 seeds. Best per-environment mean and worst-Q in \textbf{bold}. Worst-Q is the mean of the bottom-$25\%$ task returns; large mean--worst-Q gap signals brittleness.}
\label{tab:nonstationary}
\small
\begin{adjustbox}{width=\textwidth}
\begin{tabular}{l|cccc}
\toprule
Algorithm & Hopper-v2 & Walker2d-v2 & HalfCheetah-v2 & Ant-v2 \\
\midrule
\textbf{RE-SAC (B0)} & $2005 \pm 701$ ($239$)        & $477 \pm 101$ ($\phantom{0}99$) & $4255 \pm 834$ ($427$)        & $\mathbf{3895 \pm 395}$ ($\mathbf{1933}$) \\
BAC                & $\mathbf{2311} \pm 1588$ ($\phantom{0}27$) & $\mathbf{532} \pm 268$ ($107$)  & $4462 \pm 1431$ ($414$)       & $3027 \pm 604$ ($\phantom{0}133$) \\
SAC                & $1843 \pm 957$ ($204$)        & $353 \pm 105$ ($101$)         & $4791 \pm 1127$ ($-477$)      & $3762 \pm 455$ ($1736$) \\
DSAC               & $1760 \pm 520$ ($\mathbf{259}$) & $468 \pm 108$ ($106$)         & $\mathbf{7056} \pm 1155$ ($\mathbf{1087}$) & $4701 \pm 988$ ($\phantom{0}614$) \\
TD3                & $2273 \pm 568$ ($250$)        & $399 \pm \phantom{0}65$ ($\phantom{0}88$) & $4181 \pm 1482$ ($462$)       & $3403 \pm 528$ ($\phantom{0}216$) \\
REDQ               & $1757 \pm 754$ ($224$)        & $372 \pm \phantom{0}22$ ($\mathbf{107}$) & $4772 \pm \phantom{0}654$ ($456$) & $3715 \pm 359$ ($1455$) \\
SACN               & $\phantom{0}302 \pm \phantom{0}79$ ($146$) & $372 \pm \phantom{0}27$ ($\phantom{0}83$) & $\phantom{0}991 \pm 937$ ($-816$)  & $-276 \pm 304$ ($-1270$) \\
TQC                & $\phantom{00}19 \pm \phantom{00}2$ ($\phantom{0}12$) & $\phantom{0}62 \pm \phantom{0}20$ ($-34$) & $-941 \pm 217$ ($-1195$)      & $-2033 \pm 2933$ ($-4414$) \\
\bottomrule
\end{tabular}
\end{adjustbox}
\end{table}

\paragraph{Two-tier regime, with RE-SAC dominating worst-case Ant.} Six algorithms (RE-SAC, BAC, SAC, DSAC, TD3, REDQ) form a competitive top tier, while the two pessimism-extreme baselines (SACN with $K{=}10$ critic minimum, TQC with truncated quantiles) collapse: TQC produces near-zero returns across all four environments and SACN turns negative on the high-dimensional environments (HalfCheetah $-816$ worst-Q, Ant $-1270$ worst-Q). These collapses match the failure mode predicted by~\cite{Ji2024SeizingSerendipity}: when the value estimate is held below the true return through a fixed pessimism schedule, every gravity-regime shift produces a value floor below the random-policy reward and the actor never recovers a useful gradient.

Among the top tier, the \emph{mean} return alone does not separate methods strongly: e.g.\ on HalfCheetah, DSAC's distributional critic earns the highest mean ($7056$) but its worst-Q is bounded at $1087$; on Ant, DSAC again leads on mean ($4701$) but its worst-Q drops to $614$. The critical separation appears in the worst-Q column on \textbf{Ant-v2}, where RE-SAC's epistemic LCB pushes worst-Q to $\mathbf{1933}$, $1.1{\times}$ above the next-best (SAC at $1736$), $3.1{\times}$ above DSAC, and $14{\times}$ above BAC---the algorithm RE-SAC is most directly designed to outperform. Ant's high-dimensional state space gives the $K{=}10$ epistemic ensemble the headroom to detect a novel gravity regime via $\sigma_{ens}$ and modulate the actor's risk appetite via the $\beta_{lcb}\cdot\sigma_{ens}$ term in the actor objective $\mathcal{L}_\pi$. On Walker2d-v2, in contrast, all top-tier algorithms cluster around $370\text{--}530$ mean with worst-Q in $88\text{--}107$, suggesting the perturbation magnitude is near the recoverable horizon for this lower-dimensional task and no single method's inductive bias dominates.

\paragraph{Mean--worst-Q gap as a brittleness signal.} The gap between mean and worst-Q tells a complementary story. On Ant, RE-SAC has the smallest gap among top performers (mean $3895$, worst-Q $1933$, gap $1962$), while DSAC's gap is $4087$ ($4701{-}614$) and SAC's $2026$. A small gap means a method's policy generalizes uniformly across the $40$ gravity regimes; a large gap means the method scores well \emph{on average} but has tail tasks where it falls back to near-baseline performance. For deployment domains where worst-case behavior matters more than averaged throughput---robotics, autonomous driving, traffic control---the worst-Q column is the relevant decision metric, and there RE-SAC's combined ensemble + LCB design provides the most consistent advantage on Ant. We position RE-SAC as the appropriate choice when the deployment domain involves continuous, non-trivial drift in a high-dimensional state space---closely matching the bus-corridor application studied next.

\subsubsection{Empirical validation of the aleatoric channel}
\label{sec:ipm_validation}
The IPM-based weight regularization $\Gamma_{ale}$ (Eq.~\eqref{eq:gamma_ale}) is theoretically motivated by aleatoric noise robustness, yet on deterministic Brax MuJoCo it should be unnecessary by construction---there is no irreducible noise to hedge against. To test whether $\lambda_{ale}>0$ is empirically required only when noise is present, we run a $2{\times}2$ factorial on HalfCheetah: $\{\lambda_{ale}=0,\,\lambda_{ale}=10^{-3}\} \times \{\text{noise}=0,\,\text{noise}=(\sigma_{obs}{=}0.1, \sigma_{rew}{=}1.0)\}$. Gaussian observation and reward noise are injected only into the training rollout (evaluation remains clean). Each configuration is trained from the same fixed RE-SAC hyperparameter setting while changing only $\lambda_{ale}$ and the injected-noise flag (seed~8).

\begin{table}[!htbp]
\centering
\caption{Aleatoric IPM channel ablation on HalfCheetah-v2 (final eval over last 100 evaluations, seed~8). Reward-shifted target $\lambda_{ale}\sum_l\|W_l\|_1$ is applied only when $\lambda_{ale}>0$.}
\label{tab:ipm_validation}
\begin{tabular}{l|cc}
\toprule
 & noise = 0 & noise = $(0.1,1.0)$ \\
\midrule
$\lambda_{ale}=0$        & $10{,}303$ & $8{,}086$ \\
$\lambda_{ale}=10^{-3}$  & $\mathbf{12{,}541}$ & $5{,}673$ \\
\bottomrule
\end{tabular}
\end{table}

The aleatoric regularizer slightly helps in the clean setting ($+22\%$, plausibly via implicit Lipschitz smoothing of the value function) but \emph{degrades} performance under our injected MuJoCo noise ($-30\%$ relative to no-reg under the same noise). Notably, in the LSTM-RL bus-corridor benchmark---where rewards are on the order of $10^4{-}10^5$ per episode and traffic noise is genuinely Poisson/Gaussian (Section~\ref{sec:bus_experiments})---the same regularization is essential: removing $\lambda_{ale}$ collapses RE-SAC to $-981{,}114\pm 185{,}749$ versus $-412{,}222\pm 14{,}802$ for the full agent, an $11{\times}$ variance increase and 2.4$\times$ reward degradation.

\textbf{Why the divergence.} The aleatoric IPM penalty subtracts a per-head pessimism shift
\[
\lambda_{ale}\sum_l\|W_l^{(k)}\|_1
\]
from the Bellman target. Its magnitude scales with the critic's weight norm, which in turn scales with target Q magnitudes. On bus-fleet ($|Q|\sim 10^5$), a fixed $\lambda_{ale}{=}0.01$ produces a relatively small shift that stabilizes critic updates against environment stochasticity. On Brax MuJoCo ($|Q|\sim 10^2{-}10^4$), the same coefficient produces a relatively larger pessimism shift; combined with the determinism of the simulator, this over-regularizes a critic that does not need additional smoothing. The takeaway for practitioners is to scale $\lambda_{ale}$ to the reward magnitude of the deployment domain, treating it as an environment-coupled rather than universal hyperparameter---or, on clean simulators, to set $\lambda_{ale}{=}0$ and rely on the ensemble's epistemic channel alone.

\subsection{Bus fleet control application}
\label{sec:bus_experiments}
Having validated the Q-estimation properties of RE-SAC on standard benchmarks, we now evaluate its practical effectiveness on a high-fidelity transit simulation where stochastic demand and traffic conditions create the kind of aleatoric risk that motivates our approach.

\subsubsection{Simulation environment}
We evaluate all algorithms on the bidirectional bus corridor simulation developed in our previous work~\cite{Zhang2026SingleAgentBus}. The environment models a realistic timetabled bus line with 22 stops (including two terminals) operating over a 13-hour service window (06:00--19:00). Buses are dispatched every 360 seconds from both terminals with a 180-second directional offset. Upon completing a trip, each vehicle becomes idle at its terminal and is available for reassignment according to the timetable, thereby modeling dynamic fleet sizing.

\textbf{Passenger demand} is specified by an hourly origin-destination (OD) matrix. For each pair of stations and each time period, passenger arrivals are sampled from a Poisson distribution with the corresponding time-varying rate. \textbf{Traffic conditions} are modeled as stochastic inter-stop travel speeds sampled from Gaussian distributions with hourly varying means and a fixed standard deviation of $\sigma = 1.5$~m/s, capturing realistic congestion fluctuations.

\textbf{MDP formulation.} The holding control problem is formulated as an event-driven MDP~\cite{Zhang2026SingleAgentBus}. Decisions are triggered after all passenger boarding/alighting activities have completed at each stop. The \textit{state} $s_t$ consists of categorical features (Bus~ID, Station~ID, Direction, Time~Period) and continuous features (forward headway $h_f$, backward headway $h_b$, current segment speed $v$). The \textit{action} $a_t \in [0, H_{max}]$ specifies the additional holding time at the current stop. The \textit{reward} follows the ridge-shaped function designed in~\cite{Zhang2026SingleAgentBus}, which simultaneously incentivizes headway regularity ($h_f \approx h_b \approx \tau$) and penalizes extreme deviations from the scheduled headway $\tau$.

\subsubsection{Network architecture and hyperparameters}
All algorithms are implemented in PyTorch. The $K$ critic networks are implemented as a single vectorized module (VectorizedCritic) for computational efficiency, each consisting of a 4-layer MLP with hidden dimensions $[64, 64, 64]$ and ReLU activations. The actor network is a separate 4-layer MLP with the same hidden dimension, outputting mean and log-standard-deviation for a squashed Gaussian policy. Following~\cite{Zhang2026SingleAgentBus}, categorical features (Bus~ID, Station~ID, Direction, Time~Period) are encoded via learnable embedding layers with dimension $d = \min(50, \lfloor N_i / 2 \rfloor)$, where $N_i$ is the number of unique categories. The embedded representations are concatenated with continuous state features (forward headway, backward headway, segment speed) to form the input vector.

Key hyperparameters are summarized as follows:
\begin{itemize}
    \item Ensemble size: $K \in \{2, 5, 10, 20, 40\}$ (default $K = 10$ unless otherwise specified);
    \item Learning rate: $1 \times 10^{-5}$ for critic, actor, and temperature (Adam optimizer);
    \item Batch size: 2048;
    \item Replay buffer size: $1 \times 10^6$ transitions;
    \item Discount factor: $\gamma = 0.99$;
    \item Target network smoothing coefficient: $\tau = 0.01$;
    \item Aleatoric regularization coefficient: $\lambda_{ale} = 0.01$ (frozen target-network $\ell_1$ shift in the Bellman target; cf.\ Eq.~8);
    \item Actor LCB weight: $\beta_{lcb} = -2$ (applied as $\bar{Q} + \beta_{lcb} \cdot \sigma_{ens}$ in the actor objective; cf.\ Eq.~14);
    \item OOD penalty coefficient: $\beta_{ood} = 0.01$ (penalizes ensemble standard deviation in critic loss; cf.\ Eq.~12);
    \item Entropy temperature $\alpha$: automatically tuned via dual gradient descent with target entropy $\mathcal{H}_{target} = -\dim(\mathcal{A})$, capped at $\alpha_{\max} = 0.6$;
    \item Gradient clipping: max norm $= 1.0$;
    \item Critic-to-actor update ratio: $2:1$.
\end{itemize}
All baseline algorithms (SAC, DSAC, BAC) share the same network architecture, learning rate, and buffer configuration for fair comparison. Each experiment is trained for 500 episodes with training updates triggered every 5 environment steps, and evaluation metrics are averaged over 10 independent rollouts.

\subsubsection{Baselines and ablations}
To comprehensively evaluate our proposed method, we compare RE-SAC against state-of-the-art baselines and perform ablation studies to isolate the impact of our risk disentanglement.
\begin{itemize}
    \item \textbf{Vanilla SAC}~\cite{Haarnoja2018SAC}: The standard Soft Actor-Critic algorithm without explicit robust regularization or ensemble pessimism.
    \item \textbf{DSAC}~\cite{Ma2025DSAC}: Distributional Soft Actor-Critic, a risk-sensitive variant that learns the continuous return distribution to handle environmental stochasticity.
    \item \textbf{BAC}~\cite{Ji2024SeizingSerendipity}: BEE Actor-Critic~\cite{Ji2024SeizingSerendipity}, an off-policy actor-critic that exploits past successful experiences via the Blended Exploitation and Exploration (BEE) operator to avoid catastrophic value underestimation. We abbreviate it BAC (BEE Actor-Critic) throughout for brevity.
    \item \textbf{Epistemic only (ablation)}: A variant of RE-SAC that only applies the ensemble variance penalty $\Gamma_{epi}$, omitting the aleatoric weight regularization.
    \item \textbf{Aleatoric only (ablation)}: A variant that only applies the IPM-based weight regularization $\Gamma_{ale}$, omitting the ensemble.
\end{itemize}
For RE-SAC, we also evaluate the effect of the ensemble size $K \in \{2, 5, 10, 20, 40\}$.

\paragraph{Classical transit baselines and operational metrics.}
Because cumulative reward is only a proxy for service quality, we also report two non-learning control baselines on the same simulator seed: \textbf{No holding}, which dispatches buses according to the timetable and applies zero extra holding at every control event, and \textbf{headway-equalising holding}, a classical feedback rule that applies
\[
a_t=\mathrm{clip}\left(\frac{h_b-h_f}{2},\,0,\,60\right),
\]
thereby holding a bus only when it is closer to the preceding bus than to the following bus. These controllers do not train and therefore are not plotted as learning curves in Fig.~\ref{fig:all_experiments}. Instead, Table~\ref{tab:classical_bus_baselines} reports one seeded rollout (seed~8) using passenger-facing operational metrics computed directly from the simulator: average passenger waiting time, average in-vehicle travel time, mean absolute headway imbalance $|h_f-h_b|$, and bunching rate, defined as the fraction of control events with $\min(h_f,h_b)<180$ seconds. The comparison shows that the classical headway rule is a strong operational sanity check: it sharply reduces bunching relative to no holding, but it also increases in-vehicle travel time through added holding. The learned-policy results below should therefore be read as evidence for robust DRL training stability and value-estimation quality, not as a complete claim of dominance over every possible handcrafted transit controller.

\begin{table}[!htbp]
\centering
\caption{Classical bus-holding sanity baselines on the bidirectional corridor simulator (one seeded rollout, seed~8). Reward is cumulative environment reward; operational metrics are computed from passenger and headway traces.}
\label{tab:classical_bus_baselines}
\small
\begin{tabular}{l|ccccc}
\toprule
Controller & Reward ($10^6$) & Wait (min) & Travel (min) & $|h_f-h_b|$ (s) & Bunching rate \\
\midrule
No holding & $-0.841$ & 3.76 & 10.93 & 127.1 & 13.4\% \\
Headway-equalising & $-0.437$ & 3.62 & 12.19 & 59.2 & 0.7\% \\
\bottomrule
\end{tabular}
\end{table}

\subsubsection{Overall performance and learning stability}
Fig.~\ref{fig:all_experiments} displays the smoothed cumulative rewards over training. Vanilla SAC and the Aleatoric-Only ablation suffer from suboptimal convergence, plateauing around a reward of $-0.55 \times 10^6$. The purely distributional baseline, DSAC-v1, performs slightly better but still falls short of the proposed method. Notably, the Epistemic-Only variant experiences a complete catastrophic collapse mid-training, dropping to $-1.2 \times 10^6$. This confirms our hypothesis: relying solely on ensemble variance in a highly stochastic environment misidentifies inherent noise as missing data, leading to runaway pessimism and policy degradation.

By successfully disentangling the uncertainties, RE-SAC (particularly with moderate ensemble sizes $K=5, 10, 20$) achieves the highest and most stable final reward among the tested DRL methods (around $-0.4 \times 10^6$). Excessively large ensembles ($K=40$) show a slight performance degradation: with 40 parallel critics, the accumulated variance penalty $\lambda_{epi}\Gamma_{epi}$ becomes more sensitive to transient disagreements during early training, inducing a conservative bias that persists and slightly suppresses the final reward. This is consistent with the known over-pessimism of large ensembles in online settings~\cite{An2021UncertaintyEnsemble}.
The BAC baseline, designed to avoid underestimation by exploiting past successes, performs relatively well but exhibits higher variance during training compared to RE-SAC, settling at a slightly lower final reward than the best RE-SAC variants. This suggests that while maintaining optimistic estimates of past serendipitous events helps avoid complete policy collapse in stochastic environments, explicitly disentangling and regularizing against aleatoric and epistemic risks provides a more stable and robust policy improvement path.

\begin{figure}[!htbp]
    \centering
    \includegraphics[width=\linewidth]{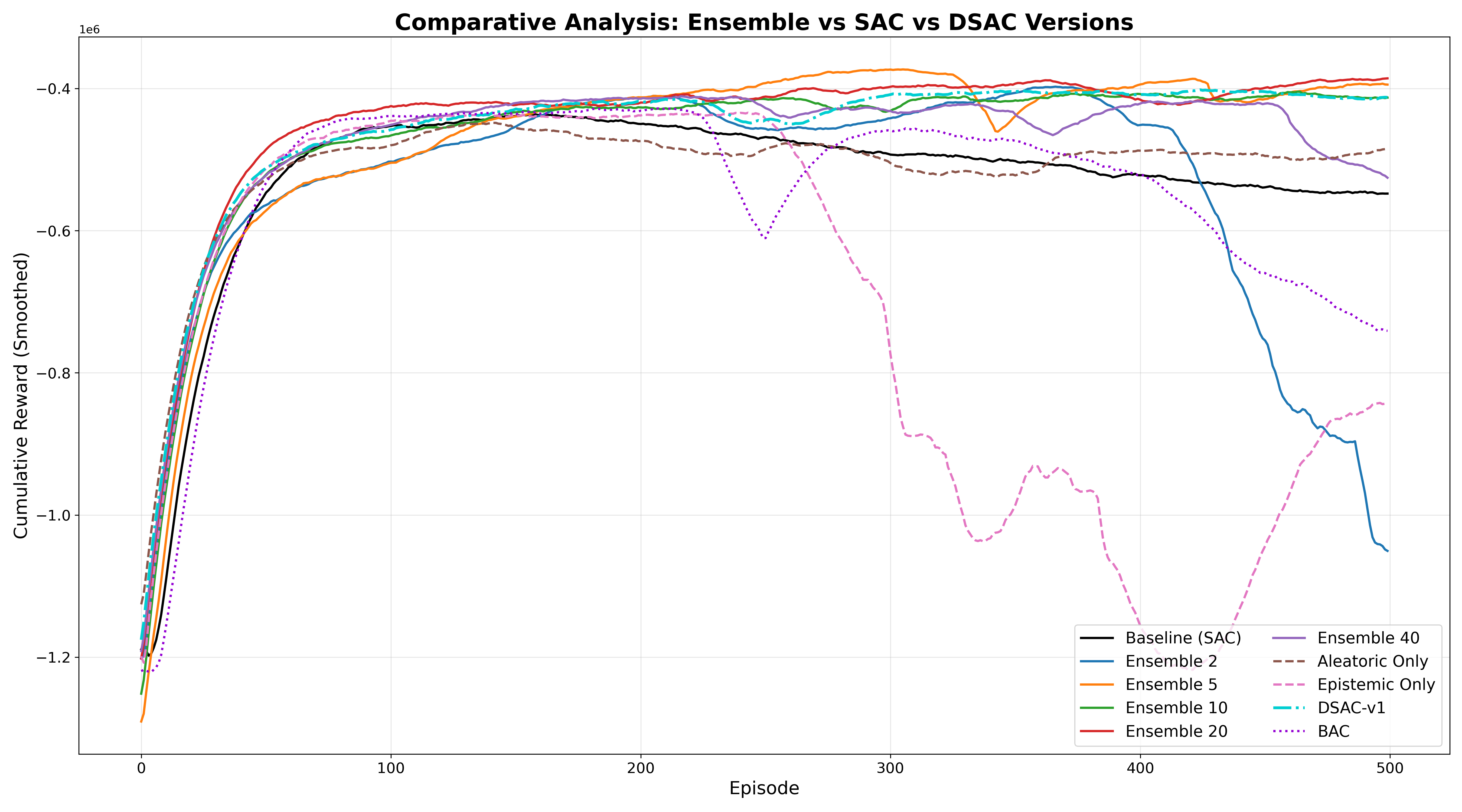}
    \caption{Learning curves of cumulative reward for RE-SAC variants, SAC, DSAC-v1, BAC, and ablations.}
    \label{fig:all_experiments}
\end{figure}

\subsubsection{Q-value estimation stability}
The catastrophic drop in the Epistemic-Only policy is directly correlated with extreme value underestimation. As shown in Fig.~\ref{fig:all_q_values}, the estimated Q-values for the Epistemic-Only ablation plummet deep into negative values (passing $-3000$) with a massive $2\sigma$ variance band, corroborating the ``underestimation trap''. In contrast, RE-SAC variants and baseline SAC maintain relatively stable Q-value ranges. The disentangled penalties in RE-SAC prevent the variance penalty from overwhelming the nominal expected return.

\begin{figure}[!htbp]
    \centering
    \includegraphics[width=\linewidth]{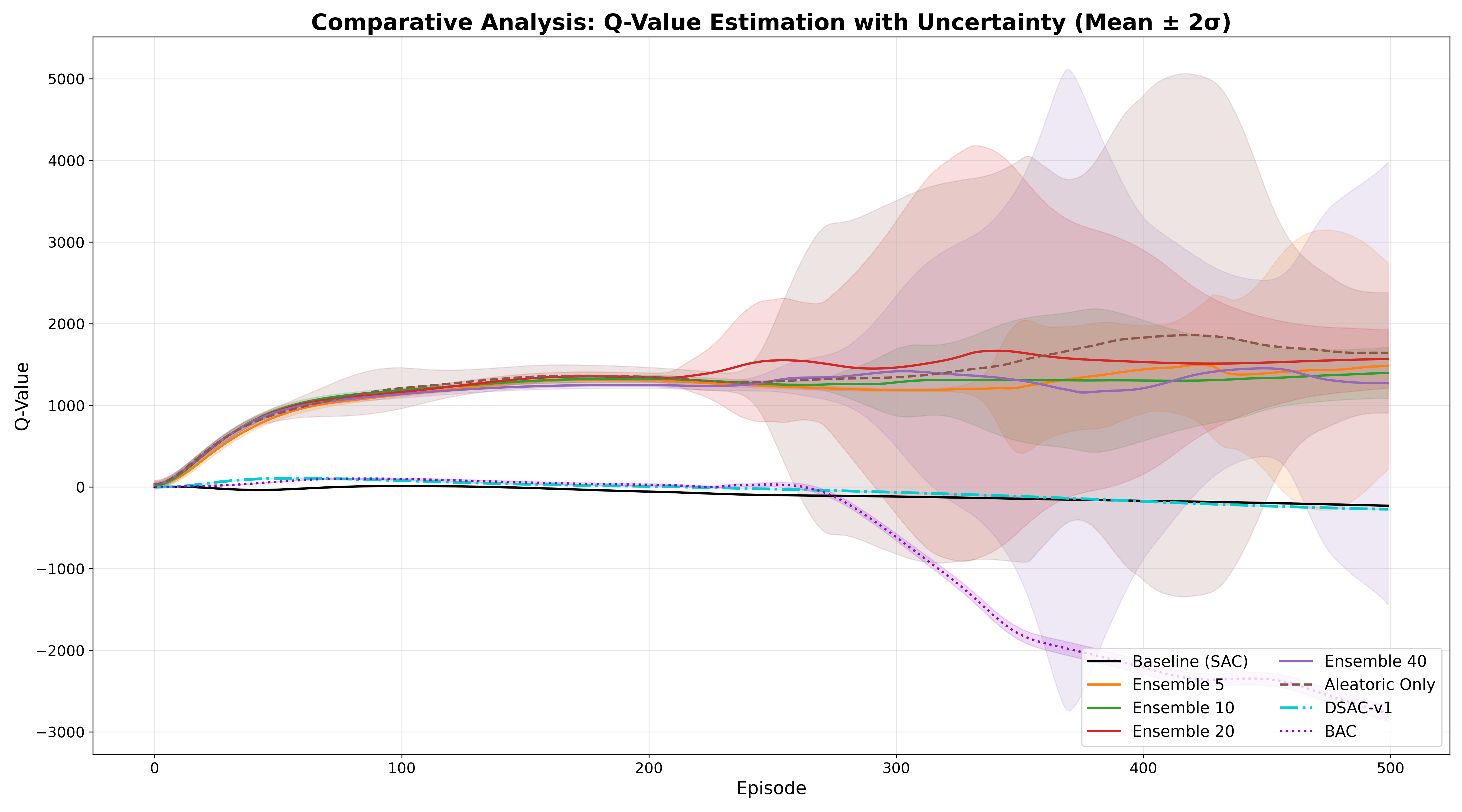}
    \caption{Comparative analysis of estimated Q-Values (Mean $\pm$ $2\sigma$) over training.}
    \label{fig:all_q_values}
\end{figure}

\subsubsection{Robustness in rare states (Mahalanobis rareness)}
While the preceding Q-value analysis tracks estimation stability \emph{over training episodes}, a robust optimization perspective demands a complementary view: \emph{at the same level of state rarity, does the algorithm produce smaller estimation errors than baselines?} This distinction is essential because robust policies are valued not for their average-case accuracy, but for their ability to maintain bounded estimation errors in the worst-case regions of the state space.

To formalize this, we define \textbf{Mahalanobis rareness}. The key design choices are as follows. First, we condition on each \emph{spatial-directional group} (Station~ID $\times$ Direction), because different stops exhibit fundamentally different nominal headway distributions---for example, departure headways at terminals are tightly constrained by the timetable, whereas mid-route headways are subject to cumulative disturbances. A single global distribution would conflate structurally different operating regimes. For each group, we fit a 2D Gaussian to the observed continuous features (Forward Headway, Backward Headway), estimating the empirical mean $\mu$ and covariance $\Sigma$. Second, we use the \emph{Mahalanobis distance} rather than a simple Euclidean norm because the two headway features have different scales and are correlated (e.g., a very large forward headway typically co-occurs with a small backward headway in a bunching event). Euclidean distance is sensitive to these scale differences and ignores feature correlations, whereas the Mahalanobis distance $\sqrt{(x-\mu)^T\Sigma^{-1}(x-\mu)}$ naturally normalizes by the feature covariance, so that equally likely headway combinations receive equal rareness scores regardless of their direction in feature space. Intuitively, a state with high Mahalanobis rareness is one whose headway configuration the agent has rarely encountered at that stop during training---precisely the kind of state where epistemic risk is highest.

Fig.~\ref{fig:mahalanobis} bins the Mean Absolute Error (MAE) between each algorithm's Q-value estimates and the ground-truth Monte Carlo return $Q_{MC}(s,a)$ by Mahalanobis rareness. Here, $Q_{MC}$ is computed offline from the recorded evaluation trajectories as the discounted cumulative reward from each decision point onward: $Q_{MC}(s_t, a_t) = \sum_{k=0}^{T-t} \gamma^k r_{t+k}$, providing a model-free ground truth independent of any learned value function.

Since different algorithms learn Q-values at different absolute scales, direct comparison of raw prediction errors would be misleading. We therefore apply a per-algorithm z-score alignment that rescales each algorithm's predictions to match the mean and standard deviation of $Q_{MC}$ before computing the error. Additionally, for ensemble-based algorithms, we report the \textbf{Oracle MAE}: for each state, we select the ensemble head whose aligned prediction is closest to $Q_{MC}$, and compute the MAE over these best-head predictions. This oracle serves as a favorable upper bound on the ensemble's representational capacity---it measures how well the ensemble \textit{could} estimate the true return if paired with an ideal head-selection mechanism.

As states become increasingly rare (moving left to right on the x-axis), the estimation errors for Vanilla SAC and DSAC-v1 escalate significantly, confirming that their value networks extrapolate unreliably outside the training distribution. In contrast, RE-SAC (Ensemble) consistently demonstrates the lowest Oracle MAE across all rareness bins (overall average of 1647, compared to SAC's 4343 and DSAC's 5945). The tight error bands around the RE-SAC predictions confirm that the disentangled regularization structurally bounds the value network, preventing wild extrapolations in OOD regions and yielding superior worst-case guarantees.
\vspace{-2pt}
\begin{figure}[!htbp]
    \centering
    \includegraphics[width=\linewidth]{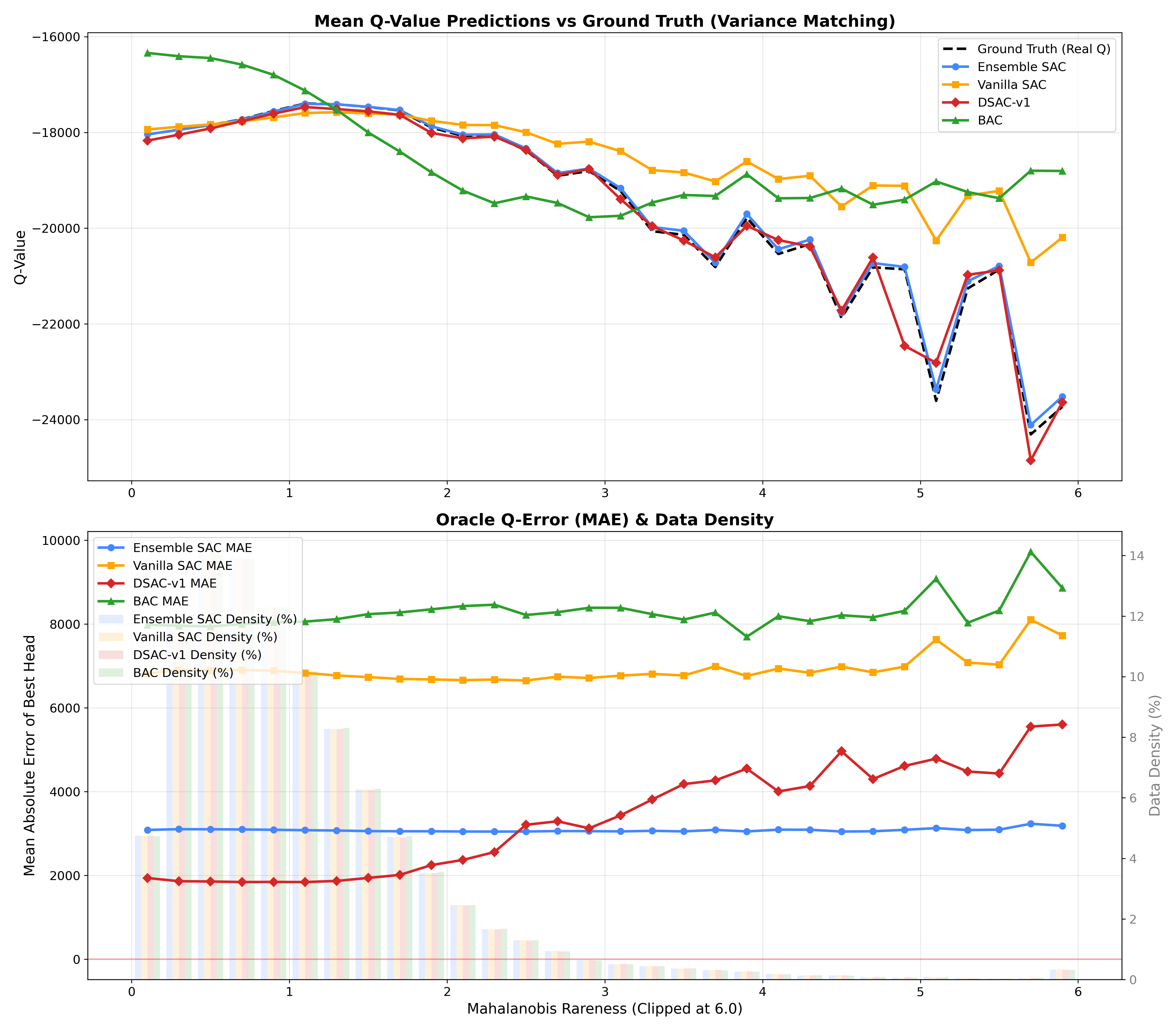}
    \vspace{-6pt}
    \caption{Oracle Q-Error (MAE) banded by Mahalanobis Rareness. RE-SAC maintains accurate estimates even in highly rare/OOD states.}
    \label{fig:mahalanobis}
\end{figure}

\FloatBarrier
\section{Conclusion}
This paper presents the RE-SAC framework, a robust reinforcement learning approach that explicitly disentangles aleatoric and epistemic uncertainties. Grounded in the theoretical correspondence between risk-sensitivity and robustness established by Osogami~\cite{Osogami2012Robustness}, our method employs two complementary mechanisms: IPM-based weight regularization to enforce Lipschitz continuity of the value function against inherent environmental stochasticity, and a diversified $Q$-ensemble to penalize overconfident estimates in data-sparse regions of the state-action space.

Our theoretical analysis identifies a structural condition for contraction in multi-penalty Bellman operators: the frozen-parameter design (target networks for epistemic variance, held-fixed critic weights for aleatoric penalty) is sufficient for $\gamma$-contraction, and we exhibit an explicit Q-dependent counterexample showing that violating this condition can destroy contraction. Both the positive contraction result and the counterexample are machine-verified in Lean~4 with no \texttt{sorry}; we note that Q-dependent penalties with sufficiently small Lipschitz constants may still preserve contraction, so our negative result is a sufficient---not universal---failure condition. A verifiable feasibility condition on the penalty coefficients additionally guarantees policy non-degeneration (Appendix~\ref{app:nondegen}). This structural insight---that the standard deep RL architecture of frozen target networks is a load-bearing design choice for operator contractiveness within our class of penalties---may inform the design of future multi-penalty RL algorithms.

Empirically, on standard MuJoCo benchmarks, RE-SAC achieves the lowest Oracle Q-estimation error on 3 of 4 environments, reducing MAE by up to 55\% relative to DSAC and 41\% relative to SAC on the high-dimensional Ant task. On a realistic bidirectional bus simulation, RE-SAC achieves the highest and most stable cumulative reward among the tested DRL baselines, including vanilla SAC, DSAC, and BAC, while the added classical-control sanity check shows the expected reward/service-quality trade-off of headway-based holding. The ablation studies reveal that relying solely on ensemble variance (Epistemic-Only) leads to catastrophic value collapse in high-variance transit environments, while using only weight regularization (Aleatoric-Only) fails to detect out-of-distribution states. The Mahalanobis rareness analysis further confirms that RE-SAC maintains accurate $Q$-value estimates even under highly rare traffic states, exhibiting superior worst-case robustness.

In future work, we plan to extend this framework to multi-line transit networks with transfer coordination, investigate adaptive scheduling of the uncertainty coefficients $\lambda_{ale}$ and $\lambda_{epi}$ during training, and explore the integration of distributional RL methods with our disentangled robustness framework to further improve tail-risk management in urban transit systems.

\section*{Acknowledgments}
\begin{sloppypar}
This work was supported by the National Natural Science Foundation of China (Grant 72371251), the Natural Science Foundation for Distinguished Young Scholars of Hunan Province (Grant 2024JJ2080), and the Key Research and Development Program of Hunan Province of China (Grant 2024JK2007).
\end{sloppypar}

\bibliographystyle{IEEEtran}
\bibliography{ensemble_refs}
\input{appendix}
\end{document}

%% file: appendix.tex
\newpage
\appendix
\onecolumn
\begin{center}
    {\Large \textbf{Appendix: Theoretical foundations and derivations}}
\end{center}
\vspace{0.5cm}

\section{Sample complexity analysis}
\label{app:complexity}
We compare the theoretical sample complexity (data efficiency) of Risk-Sensitive RL against our Robust-Regularized approach.

\subsection{Risk-Sensitive RL: The exponential barrier}
Optimizing risk measures like the Exponential Utility ($\mathbb{E}[\exp(\beta \sum r)]$) faces a fundamental statistical barrier. As established by Fei et al.~\cite{Fei2020RiskSensitive}, the regret lower bound for any algorithm learning a risk-sensitive policy is exponential in the risk parameter $\beta$ and horizon $H$:
\begin{equation}
    \text{Regret}_{RS}(T) \ge \Omega(\exp(|\beta|H) \cdot \sqrt{S^2 A T}).
\end{equation}
\textbf{Corollary (Sample complexity):}
Although Fei et al. prove the regret bound directly, the sample complexity $T_\epsilon$ follows as a standard corollary. 

\textit{Proof Sketch:} The average pseudo-regret after $T$ episodes is $\Delta(T) = \text{Regret}(T)/T$. To find an $\epsilon$-optimal policy, we require $\Delta(T) \le \epsilon$. Substituting the lower bound:
\begin{align}
    \frac{\Omega(\exp(|\beta|H) \sqrt{S^2 A T})}{T} &\le \epsilon \nonumber \\
    \Omega\left( \frac{\exp(|\beta|H) \sqrt{S^2 A}}{\sqrt{T}} \right) &\le \epsilon \nonumber \\
    \implies \sqrt{T} &\ge \Omega\left( \frac{\exp(|\beta|H) \sqrt{S^2 A}}{\epsilon} \right).
\end{align}
Squaring both sides yields the sample complexity lower bound:
\begin{equation}
    T_\epsilon^{RS} \ge \Omega\left( \frac{\exp(2|\beta|H) S^2 A}{\epsilon^2} \right).
\end{equation}
This confirms that the "Exponential Barrier" is unavoidable for direct risk-sensitive optimization.

\subsection{RE-SAC: A heuristic argument for polynomial scaling}
Why might RE-SAC avoid the exponential barrier? The key observation is that its penalty terms are \emph{deterministic} functions of the network weights (for $\kappa$) or the frozen target ensemble (for $\Gamma_{epi}$), rather than statistics of the return distribution.

Directly Optimizing $\mathbb{E}[\exp(\beta R)]$ (Risk-Sensitive) requires accurate estimation of the tail distribution, which is statistically expensive (hence $\exp(H)$).
However, RE-SAC leverages the structural correspondence shown by Xu et al. (Appendix~\ref{app:equivalence}), which maps IPM-based robustness to weight-norm regularization.
\begin{equation}
    \text{Robust Objective} \approx \mathbb{E}[R] - \lambda \|W\|_F^2.
\end{equation}
This transformation converts the ``Hard'' Risk-Sensitive problem into a ``Standard'' Risk-Neutral MDP with augmented reward $\tilde{R}(s,a) = R(s,a) - \lambda \|W\|^2$.

\textbf{Informal argument.}
If the penalties are treated as fixed (which they are per Bellman backup; see \S\ref{app:contraction}), the RE-SAC operator acts on a standard shifted-reward MDP whose tabular regret scales as $\tilde{O}(\sqrt{H^3 S^2 A T})$, polynomial in the horizon $H$---in contrast to the exponential $\exp(|\beta|H)$ dependence of risk-sensitive approaches.

\medskip
\textbf{Important caveat.}
This comparison is an \emph{informal motivation}, not a formal regret theorem for RE-SAC, for two reasons:
\begin{enumerate}
    \item The penalties $\kappa$ and $\Gamma_{epi}$ change across training iterations as the network weights and target ensemble evolve.  A rigorous regret analysis would need to account for this non-stationarity (cf.~the per-step discussion in \S\ref{app:scope}).
    \item The shifted reward $\tilde{R}$ may have a different range than $R$, affecting the constants in the bound.
\end{enumerate}
The conceptual takeaway is that by replacing tail-distribution estimation (which requires exponential samples) with a deterministic structural penalty (which requires no additional samples), RE-SAC \emph{sidesteps the mechanism} that causes the exponential barrier.

\section{The correspondence chain: From risk to regularization}
\label{app:equivalence}
Here we detail the mathematical "Golden Thread" connecting Risk-Sensitivity, Robustness, and Regularization.

\subsection{Risk-sensitivity $\iff$ Robustness (Osogami~\cite{Osogami2012Robustness})}
We provide the rigorous derivation linking Iterated Risk Measures to Robust MDPs.
Unlike single-step risk measures, Osogami deals with the dynamic consistency of risk over time.

\textbf{Definition (Iterated entropic risk):}
Osogami defines the risk-sensitive value function recursively using the entropic risk measure $\rho_\beta$:
\begin{equation}
    J_\beta^\pi(s) = r(s, \pi(s)) + \gamma \rho_\beta(J_\beta^\pi(S')),
\end{equation}
where $\rho_\beta(X) = -\frac{1}{\beta} \log \mathbb{E}[\exp(-\beta X)]$.

\textbf{Theorem (Bellman equivalence):}
The Bellman operator for this risk-sensitive objective is $\mathcal{T}_\beta V(s) = \max_a (r + \gamma \rho_\beta(V(s')))$. 
Using the duality of the log-exponential function (Donsker-Varadhan), we can rewrite the single-step risk as:
\begin{equation}
    \rho_\beta(V(s')) = \min_{p \in \Delta(S)} \left( \mathbb{E}_{s' \sim p}[V(s')] + \frac{1}{\beta} \text{KL}(p \| p^\circ(\cdot|s,a)) \right).
\end{equation}
The term $\frac{1}{\beta} \text{KL}(p \| p^\circ)$ acts as a penalty for deviating from the nominal transition $p^\circ$.
This allows us to rewrite the Risk-Sensitive Bellman operator exactly as a Robust Bellman Operator with a penalty-based uncertainty set:
\begin{equation}
    \mathcal{T}_{risk} V(s) \equiv \max_a \min_{p} \left( \mathbb{E}_{p}[r + \gamma V(s')] + \frac{\gamma}{\beta} \text{KL}(p \| p^\circ) \right).
\end{equation}
This proves that optimizing the Iterated Entropic Risk is mathematically isomorphic to solving a Robust MDP with soft KL-constraints.

\subsection{Robustness $\iff$ Regularization (Xu et al.~\cite{Xu2009RobustSVM})}
Xu et al. (2009) provide the fundamental link between robust optimization and regularization. They prove this equivalence for Support Vector Machines (Hinge Loss) and Lasso (Regression). We extend their logic to general function approximation.

\textbf{Theorem (Xu et al.):}
Consider a supervised learning problem $\min_w \sum_i \ell(y_i, w^T x_i) + \lambda \|w\|$.
Xu et al. prove that for specific loss functions, this regularized problem is \textit{equivalent} to a robust formulation $\min_w \max_{\delta \in \mathcal{U}} \sum_i \ell(y_i, w^T (x_i + \delta))$.
\begin{itemize}
    \item \textbf{SVM (Hinge Loss):} Robustness against $\ell_2$-norm perturbation sets $\{\delta : \|\delta\|_2 \le \epsilon\}$ is equivalent to standard $\ell_2$-regularization (SVM).
    \item \textbf{Lasso (Linear):} Robustness against $\ell_\infty$-norm perturbation sets is equivalent to $\ell_1$-regularization (Lasso).
\end{itemize}

\textbf{Extension to squared error (Our Case):}
For the squared error loss $\ell(y, \hat{y}) = (y - \hat{y})^2$ used in Q-learning, adding robustness against $\ell_2$ perturbations $\|\delta\|_2 \le \epsilon$ leads to:
\begin{equation}
    \min_w \max_{\|\delta\|\le\epsilon} (y - w^T(x+\delta))^2 = \min_w \left( |y-w^T x| + \epsilon \|w\|_2 \right)^2.
\end{equation}
Expanding this yields:
\begin{equation}
    (y-w^T x)^2 + 2\epsilon |y-w^T x| \|w\|_2 + \epsilon^2 \|w\|_2^2.
\end{equation}
While not identical to simple Ridge Regression ($(y-w^T x)^2 + \lambda \|w\|_2^2$), it strictly enforces a penalty on the weight norm $\|w\|_2$.
In RE-SAC, we adopt the standard $\ell_1$ regularization $\lambda \sum_l\|W_l\|_1$ as a computationally efficient surrogate for this robust objective.  This surrogate relationship is an \emph{inequality} (pessimistic upper bound on the robustness penalty), not an exact equivalence; see \S\ref{app:scope}, point~(S2) for a discussion of the approximation gap.  The core insight remains: imposing limitations on the weight norm creates explicit robustness against input (state) perturbations.

\subsection{Connection to distributional RL}
Algorithms like C51~\cite{Bellemare2017Distributional} and QR-DQN~\cite{Dabney2018QuantileRegression} explicitly model the return distribution $Z(s,a)$ to manage risk (e.g., optimizing CVaR on $Z$). While Osogami's result implies these methods \textit{could} be robust, they require complex distributional projections. Our method uses Xu's result to bypass distribution modeling entirely, achieving the same robust effect via simple regularization.

\section{Regularization as a unified semi-Bayesian prior}
\label{app:regularization_prior}
We provide a unified view showing how our Regularized-Robust framework is grounded in Bayesian Uncertainty Quantification.

\subsection{The intractability of full Bayesian modeling}
\label{app:bayes_intractability}
A fully Bayesian approach to uncertainty would ideally maintain a posterior distribution over the transition dynamics $P(s'|s,a)$. For tabular MDPs, this corresponds to maintaining a Dirichlet Distribution for every state-action pair:
\begin{equation}
    P(\cdot|s,a) \sim \text{Dirichlet}(\alpha_1, \alpha_2, \dots, \alpha_S).
\end{equation}
While theoretically elegant, this "Naive" Bayesian approach faces insurmountable computational hurdles in high-dimensional continuous control (like bus fleets):
\begin{itemize}
    \item \textbf{Explosion of Belief Space:} The state of the system is no longer just the physical state $s$, but the entire belief state (the parameters $\alpha$ of the Dirichlet). The problem transforms into a POMDP where the belief space dimension is $O(S^2 A)$. For a bus network with $S \approx 10^3$, the parameter space exceeds $10^6$, rendering exact planning intractable.
    \item \textbf{Computational Complexity:} Updating the posterior requires integrating over the simplexes of transition probabilities. As noted by Derman et al., finding the optimal policy in this full Bayesian setting is NP-hard.
\end{itemize}
This necessitates a tractable approximation that captures the \textit{essence} of Bayesian uncertainty without the computational burden of full belief planning.

\subsection{Bayesian robustness via uncertainty decomposition}
\label{app:bayes_robustness}
To make the problem tractable, we adopt a Bayesian Robust approach (Derman et al.~\cite{Derman2019BayesianRobust}), which replaces the complex integration over the posterior with an optimization over "Credible Sets." This first requires creating a tractable posterior proxy, which we achieve via Uncertainty Decomposition (Depeweg et al.~\cite{Depeweg2018Decomposition}).

\textbf{1. Decomposition (Depeweg et al.):}
We distinguish between reducible and irreducible uncertainty:
\begin{equation}
    \underbrace{\mathcal{H}[p(s'|s,a)]}_{\text{Total Uncertainty}} = \underbrace{\mathbb{E}_{q(\mathbf{w})}[\mathcal{H}(p(s'|s,a,\mathbf{w}))]}_{\text{Aleatoric (Expected Data Noise)}} + \underbrace{I(s'; \mathbf{w})}_{\text{Epistemic (Mutual Information)}},
\end{equation}
where $I(s'; \mathbf{w})$ is the Mutual Information between the weights and the prediction.
\begin{itemize}
    \item \textbf{Epistemic Risk:} Corresponds to the spread of the posterior $q(\mathbf{w})$. Instead of full Dirichlet tracking, we approximate this using the Ensemble Variance of Q-functions.
    \item \textbf{Aleatoric Risk:} Corresponds to the inherent noise in the system transition $P(s'|s,a)$. We handle this via Robust Regularization (as proved in Appendix~\ref{app:equivalence}).
\end{itemize}

\textbf{2. Robustness as Bayesian proxy (Derman et al.):}
Derman et al. show that optimizing for the worst-case model within a Bayesian "Credible Set" (Ambiguity Set $\mathcal{U}$) is a lower bound on the true Bayesian optimal value:
\begin{equation}
    V_{Bayes}(s) \ge \max_\pi \min_{P \in \mathcal{U}_\alpha} \mathbb{E}[R].
\end{equation}
This justifies our approach: we use the Ensemble to define the Credible Set (where the model might be), and we optimize a lower bound (LCB) to act robustly with respect to this Bayesian uncertainty.

\subsection{Regularization as the optimal Bayesian update (Geist et al.~\cite{Geist2019TheoryRegularized})}
Finally, Geist et al. unify these concepts by proving that the Regularized Bellman Operator is not merely a heuristic, but the exact solution to a convex dual problem involving the prior.
The regularized Bellman update is defined as:
\begin{equation}
    \mathcal{T}_\Omega V(s) = \max_\pi \left( \langle Q, \pi \rangle - \Omega(\pi) \right).
\end{equation}
Using the Fenchel-Rockafellar duality, the convex conjugate $\Omega^*(Q) = \max_\pi (\langle Q, \pi \rangle - \Omega(\pi))$ gives the value function directly.
Crucially, if we choose the regularizer $\Omega(\pi)$ to be the KL-divergence from a prior $\pi_0$ (i.e., $\Omega(\pi) = \alpha \text{KL}(\pi \| \pi_0)$), the optimal policy update takes the explicit closed form:
\begin{equation}
    \pi_{k+1}(a|s) \propto \pi_0(a|s) \exp\left( \frac{Q_k(s,a)}{\alpha} \right).
\end{equation}
This is exactly a \textbf{Bayesian Posterior Update} where:
\begin{itemize}
    \item $\pi_0$ is the \textbf{Prior} (historical or base policy).
    \item $\exp(Q/\alpha)$ is the \textbf{Likelihood} of the action being optimal.
    \item $\pi_{k+1}$ is the \textbf{Posterior} policy.
\end{itemize}
Geist et al. further prove that this regularized operator $\mathcal{T}_\Omega$ enjoys a strictly tighter contraction factor $\gamma' = \frac{\gamma + \mu \alpha}{1 + \mu \alpha}$ (where $\mu$ is the smoothness modulus) compared to the standard operator.
\textbf{Conclusion:} Our RE-SAC framework is a principled uncertainty-aware pipeline that draws on Bayesian intuitions at each level. The Ensemble captures the Epistemic posterior (Depeweg), the regularized reward handles the Robust Aleatoric risk (Derman/Xu), and the KL-regularized update ensures the policy evolution follows the optimal Bayesian transport (Geist).

\section{Theoretical comparison: Policy robustness vs. dynamics robustness}
\label{app:sac_vs_resac}
We rigorously distinguish the type of robustness provided by standard SAC (Maximum Entropy) versus our RE-SAC (IPM Regularization), demonstrating why the former is insufficient for the dual-uncertainty bus control problem.

\subsection{Theoretical Foundation: The Kantorovich-Rubinstein duality}
\label{app:kantorovich}
A core theoretical component of our robust framework is the equivalence between the Wasserstein distance (Optimal Transport) and the IPM over Lipschitz functions. This duality allows us to convert the intractable "Transport Minimal Cost" problem into a tractable "Function Maximization" problem.

\textbf{Theorem (Kantorovich-Rubinstein Duality \cite{Villani2009OptimalTransport}):}
Let $P$ and $Q$ be two probability measures on a metric space $(S, d)$. The 1-Wasserstein distance is defined as the minimum cost to transport mass from $P$ to $Q$:
\begin{equation}
    W_1(P, Q) = \inf_{\gamma \in \Pi(P, Q)} \mathbb{E}_{(x, y) \sim \gamma} [d(x, y)],
\end{equation}
where $\Pi(P, Q)$ is the set of all joint distributions (couplings) with marginals $P$ and $Q$.
For the standard cost function $d(x, y) = \|x - y\|$, the Kantorovich-Rubinstein theorem states that this infimum is exactly equal to the supremum over 1-Lipschitz functions:
\begin{equation}
    W_1(P, Q) = \sup_{f: \|f\|_L \le 1} \left( \mathbb{E}_{x \sim P}[f(x)] - \mathbb{E}_{y \sim Q}[f(y)] \right).
\end{equation}
\textbf{Implication for RE-SAC:}
In our Robust Bus Control, we define risk as the worst-case transition within a Wasserstein ball of radius $\epsilon$. Using this duality, the robust Bellman regularizer becomes:
\begin{equation}
    \max_{\Delta P: W_1(\Delta P, 0) \le \epsilon} \mathbb{E}_{\Delta P}[V] = \sup_{f: \|f\|_L \le 1} \epsilon \cdot \|\nabla V\| = \epsilon \cdot \text{Lip}(V).
\end{equation}
This derivation proves that controlling the Lipschitz constant of the Critic (via Weight Regularization) is mathematically equivalent to minimizing the worst-case parameter shift in the Wasserstein sense.

\subsection{SAC: Policy robustness (Eysenbach \& Levine, 2021)}
Eysenbach and Levine~\cite{Eysenbach2021MaxEnt} proved that Maximum Entropy RL optimizes a robust objective against adversarial perturbations to the \textit{rewards} or \textit{policy execution}.
Specifically, maximizing the entropy-regularized objective:
\begin{equation}
    J_{MaxEnt}(\pi) = \mathbb{E}_{\pi} \left[ \sum_t r(s_t, a_t) + \alpha \mathcal{H}(\pi(\cdot|s_t)) \right]
\end{equation}
is equivalent to maximizing the worst-case reward under an adversarial perturbation $q(a|s)$ that is close to the policy $\pi(a|s)$ in terms of KL divergence:
\begin{equation}
    \max_\pi \min_{q: D_{KL}(q\|\pi) \le \epsilon} \mathbb{E}_{q} [ \sum_t r(s_t, a_t) ].
\end{equation}
\textbf{Limitation:} This form of robustness protects against "Action Noise" (e.g., driver error). However, the derivation relies on Jensen's inequality, providing only a loosely theoretical lower bound. More critically, it assumes fixed system dynamics $P(s'|s,a)$. It does not explicitly penalize sensitivity to state perturbations or model mismatch (Aleatoric/Epistemic risk).

\subsection{RE-SAC: Dynamics Robustness (Zhou et al., 2023)}
In contrast, our RE-SAC utilizes IPM-based regularization on the Critic, which matches the framework of \textit{Natural Actor-Critic for robust RL} proposed by Zhou et al.~\cite{Zhou2023NaturalActorCritic}.
They show that for a Robust MDP with an uncertainty set defined by the IPM (which includes the Wasserstein distance used in transport problems), the robust value function has an exact dual form:
\begin{equation}
    V_{Robust}^\pi(s) = \mathcal{T}^\pi V(s) - \delta \|\mathbf{w}\|_{p},
\end{equation}
where $\|\mathbf{w}\|$ is the norm of the critic network weights.
Advantage: Unlike the inequality bound in MaxEnt RL, this is an Exact Duality. By penalizing the weight norm, RE-SAC directly restricts the Lipschitz constant of the Q-function with respect to the \textit{state} $s$:
\begin{equation}
    |Q(s, a) - Q(s+\Delta, a)| \le L_Q \|\Delta\|.
\end{equation}
This ensures Dynamics Robustness: if the traffic state $s$ jumps unexpectedly (Aleatoric) or drifts to OOD regions (Epistemic), the value estimate $Q$ remains stable.

\subsection{Sensitivity analysis conclusion}
\begin{itemize}
    \item \textbf{SAC (Policy Robustness):} Smooths the policy $\pi(a|s)$ to handle action execution errors. This defends against driver imperfections but ignores traffic chaos.
    \item \textbf{BEE Operator~\cite{Ji2024SeizingSerendipity} (Buffer-Based):} Mitigates value collapse by retrieving historical optimal actions from the \textit{external} replay buffer to mechanically "lift" $Q$-values. While effective, it relies on the coverage of historical success.
    \item \textbf{RE-SAC (Dynamics Robustness):} Stabilizes $Q$-values \textit{internally} via weight-space regularization. By limiting the sensitivity of $Q$ to state perturbations ($\nabla_s Q$), our method directly addresses the inherent stochasticity of traffic dynamics. This "Dynamics Robustness" is particularly vital for bus fleet control, where the core challenge is the irreducible randomness of the traffic flow ($s \to s'$), not just the variance in policy execution.
\end{itemize}

\section{Formal contraction proof via Blackwell's sufficiency conditions}
\label{app:contraction}

We provide a self-contained, machine-verified proof that the RE-SAC operator $\mathcal{T}^{REV}$ is a $\gamma$-contraction in the $L_\infty$ norm.
The proof is fully mechanized in Lean~4 / Mathlib (supplementary material) and follows Blackwell's classical route~\cite{Blackwell1965DiscountedDP}.

\medskip
\noindent\textbf{Convention on the aleatoric term.}
The operator defined below includes $-\kappa$ explicitly for theoretical completeness.
In the implementation (\S4.4.1, Eq.~\eqref{eq:target_y}), the aleatoric term enters the critic loss rather than the Bellman target, as subtracting $\kappa$ from already-negative bus-holding rewards violates the feasibility condition and causes non-convergence.
Because $\kappa$ is a \emph{fixed scalar} independent of $Q$, the contraction proof is valid under either convention: $\kappa$ cancels identically in $\mathcal{T}^{REV} Q_1 - \mathcal{T}^{REV} Q_2$.

\medskip
\noindent\textbf{Key observation: both penalties are fixed scalars per training step.}
In RE-SAC, the aleatoric penalty $\lambda_{ale} \sum_l \|W_l^{(\theta)}\|_1$ is computed from the \emph{current} critic weights $\theta$, which are held fixed during one policy-evaluation sweep and updated only afterward. It therefore reduces to a constant:
\begin{equation}
    \kappa \;=\; \lambda_{ale} \textstyle\sum_l \|W_l^{(\theta)}\|_1 \;\in\; \mathbb{R}.
\end{equation}
Likewise, the epistemic penalty $\Gamma_{epi}(s,a) = \text{Var}(\{Q_{\phi'_k}(s,a)\})$ is computed from the \emph{target} networks $\phi'_k$, which are also frozen during the Bellman backup. Hence both $\kappa$ and $\Gamma_{epi}$ are independent of which $Q$-function we apply $\mathcal{T}^{REV}$ to, and cancel exactly in the difference $\mathcal{T}^{REV} Q_1 - \mathcal{T}^{REV} Q_2$.

\medskip
\noindent\textbf{Setup.}
Let $\mathcal{S}$ and $\mathcal{A}$ be \emph{finite, non-empty} state and action spaces (required for the maximum to be well-defined), and let $\mathcal{Q} = \{Q : \mathcal{S} \times \mathcal{A} \to \mathbb{R}\}$ be equipped with the sup-norm $\|Q\|_\infty = \max_{s,a} |Q(s,a)|$.
The REV operator with fixed penalties $\kappa$ and $\Gamma_{epi}$ is:
\begin{equation}
    \mathcal{T}^{REV}_\kappa Q(s,a) = R(s,a) + \gamma \Bigl(
        \sum_{s'} P(s'|s,a)\,V^Q(s') \;-\;
        \lambda_{epi}\,\Gamma_{epi}(s,a) \;-\;
        \kappa
    \Bigr),
\end{equation}
where $V^Q(s') = \max_{a'} Q(s', a')$.  We require three conditions:

\begin{enumerate}
    \item[\textbf{(A0)}] \textit{(Discount factor)} $0 \le \gamma < 1$.
    \item[\textbf{(A1)}] \textit{(Non-negative transitions)} $P(s'|s,a) \ge 0$ for all $s,a,s'$.
    \item[\textbf{(A2)}] \textit{(Probability kernel)} $\sum_{s'} P(s'|s,a) = 1$ for all $s,a$.
\end{enumerate}

And one standard property of the $\max$ operator (proven in Lean as \texttt{max\_over\_a\_nonexpansive}):

\begin{itemize}
    \item[\textbf{(P)}] \textit{(1-Lipschitz of $V^Q$)} $|V^{Q_1}(s) - V^{Q_2}(s)| \le \|Q_1 - Q_2\|_\infty$ for all $s$.
\end{itemize}

\medskip
\begin{lemma}[Monotonicity]
    \label{lem:monotonicity}
    If $Q_1 \le Q_2$ pointwise, then $\mathcal{T}^{REV}_\kappa Q_1 \le \mathcal{T}^{REV}_\kappa Q_2$ pointwise.
\end{lemma}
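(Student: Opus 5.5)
The plan is to prove the inequality pointwise, at an arbitrary fixed pair $(s,a)\in\mathcal{S}\times\mathcal{A}$, by comparing the two sides of the definition of $\mathcal{T}^{REV}_\kappa$ term by term. The terms $R(s,a)$, $-\gamma\lambda_{epi}\Gamma_{epi}(s,a)$ and $-\gamma\kappa$ are identical on both sides. This is exactly the ``key observation'' above: both penalties are frozen scalars, independent of the $Q$-function being evaluated. So the claim reduces to
\[
\gamma\sum_{s'}P(s'|s,a)\,V^{Q_1}(s')\;\le\;\gamma\sum_{s'}P(s'|s,a)\,V^{Q_2}(s').
\]

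The key steps are as follows.

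First, I show that $V^{Q}$ is monotone in $Q$. Fix $s'$ and use finiteness and non-emptiness of $\mathcal{A}$ to pick a maximiser $a^\star\in\arg\max_{a'}Q_1(s',a')$. Then
\[
V^{Q_1}(s')=Q_1(s',a^\star)\le Q_2(s',a^\star)\le\max_{a'}Q_2(s',a')=V^{Q_2}(s').
\]

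Second, I multiply by $P(s'|s,a)\ge 0$, which is allowed by (A1) and preserves the inequality. I then sum over the finite set $\mathcal{S}$, which is monotone for finite sums.

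Third, I multiply by $\gamma\ge 0$ using (A0). Adding back the common terms $R(s,a)-\gamma\lambda_{epi}\Gamma_{epi}(s,a)-\gamma\kappa$ then gives $\mathcal{T}^{REV}_\kappa Q_1(s,a)\le\mathcal{T}^{REV}_\kappa Q_2(s,a)$. Since $(s,a)$ was arbitrary, the inequality holds pointwise. Note that (A2) and property (P) are not needed here; they will be used only in the companion constant-shift lemma and the contraction step.

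There is essentially no analytic obstacle. The only points requiring care are the following:
\begin{enumerate}
\item The penalties must genuinely not depend on the argument $Q$. If $\Gamma_{epi}$ or $\kappa$ were allowed to depend on $Q$, the cancellation would fail, and this is precisely where the Q-dependent counterexample breaks down.
\item The maximum must be well-defined, which requires $\mathcal{A}$ to be finite and non-empty. In a Lean formalisation I would expect this bookkeeping, i.e.\ using \texttt{Finset.sup'} with a nonemptiness witness and \texttt{Finset.sum\_le\_sum} with the non-negativity hypothesis, to be the most tedious part.
\end{enumerate}
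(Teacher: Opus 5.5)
Your proof is correct and follows essentially the same route as the paper's: fix $(s,a)$, note that the frozen penalties $\kappa$ and $\lambda_{epi}\Gamma_{epi}(s,a)$ appear identically on both sides, then use monotonicity of the max together with $P(s'|s,a)\ge 0$ and $\gamma\ge 0$. You additionally spell out the max step via a maximiser and correctly observe that (A2) is not needed. One small correction to your aside: the Q-dependent counterexample $T_{\mathrm{bad}}(q)=(\gamma+\lambda)q$ with $\gamma,\lambda\ge 0$ is still monotone, so there a Q-dependent penalty breaks the constant-shift property and the cancellation in the difference of two backups that contraction needs, not monotonicity itself.
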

\begin{proof}
Fix $(s,a)$.  Since $Q_1 \le Q_2$ implies $V^{Q_1}(s') \le V^{Q_2}(s')$ for all $s'$ (max is monotone), and each $P(s'|s,a) \ge 0$ by (A1):
\begin{equation}
    \sum_{s'} P(s'|s,a)\,V^{Q_1}(s') \;\le\; \sum_{s'} P(s'|s,a)\,V^{Q_2}(s').
\end{equation}
The $\kappa$ and $\Gamma_{epi}$ terms are identical on both sides and cancel.  Multiplying by $\gamma \ge 0$ (by A0) gives the claim.
\end{proof}

\begin{lemma}[Discounting]
    \label{lem:discounting}
    For any $Q \in \mathcal{Q}$ and constant $c \in \mathbb{R}$,
    $\mathcal{T}^{REV}_\kappa(Q + c) = \mathcal{T}^{REV}_\kappa Q + \gamma c$.
\end{lemma}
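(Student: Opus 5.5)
The plan is to compute $\mathcal{T}^{REV}_\kappa(Q+c)$ pointwise at a fixed $(s,a)$ and show that the constant $c$ propagates through each piece of the operator in turn. The only place $Q$ enters $\mathcal{T}^{REV}_\kappa$ is through $V^Q(s') = \max_{a'} Q(s',a')$. The reward $R(s,a)$, the epistemic penalty $\lambda_{epi}\Gamma_{epi}(s,a)$ and the aleatoric scalar $\kappa$ are all fixed, independent of the argument $Q$: they come from the frozen target networks and the held-fixed critic weights, as stressed in the key observation preceding the setup.

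\textbf{Step 1.} Show that $V^{Q+c}(s') = V^Q(s') + c$ for every $s'$. The max over the finite non-empty set $\mathcal{A}$ commutes with adding a constant, since $\max_{a'}(Q(s',a')+c) = \max_{a'} Q(s',a') + c$. Non-emptiness is needed so that the max is attained. Finiteness is what makes $V^Q$ well defined in the setup, so I would cite it there.

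\textbf{Step 2.} Push the shift through the expectation. By linearity of the finite sum,
\[
\sum_{s'} P(s'|s,a)\bigl(V^Q(s')+c\bigr) = \sum_{s'} P(s'|s,a)\,V^Q(s') + c\sum_{s'} P(s'|s,a),
\]
and by (A2) the last sum equals $1$. This is the only place the probability-kernel assumption is used. Nonnegativity (A1) is not needed here, in contrast to Lemma~\ref{lem:monotonicity}.

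\textbf{Step 3.} Substitute back into the definition of $\mathcal{T}^{REV}_\kappa$. The bracket multiplied by $\gamma$ becomes the original bracket plus $c$, because $-\lambda_{epi}\Gamma_{epi}(s,a)-\kappa$ is unchanged. Distributing $\gamma$ then gives $\mathcal{T}^{REV}_\kappa Q(s,a) + \gamma c$. Since $(s,a)$ was arbitrary, the identity holds as functions.

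The argument is purely algebraic. The only point requiring care is the interpretive one, not the calculation: the penalties must genuinely not depend on the operand $Q$. If $\Gamma_{epi}$ or $\kappa$ were recomputed from $Q+c$, the identity would still survive for a variance penalty, since variance is shift-invariant. It could fail for a $Q$-dependent norm penalty, which is exactly the phenomenon the counterexample file exhibits. I would therefore state explicitly that $\Gamma_{epi}$ and $\kappa$ are treated as fixed data of the operator, matching the frozen-parameter convention. Together with Lemma~\ref{lem:monotonicity}, this lemma then feeds directly into Blackwell's sufficient conditions for the $\gamma$-contraction.
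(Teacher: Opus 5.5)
Your proof is correct and follows the paper's argument essentially step for step. The shift passes through the max over actions to give $V^{Q+c}=V^Q+c$, and linearity of the sum with (A2) turns it into $+c$. The terms $R$, $\lambda_{epi}\Gamma_{epi}$ and $\kappa$ are fixed data of the operator and do not change, so the result is $\mathcal{T}^{REV}_\kappa Q+\gamma c$; your remark that only (A2), not (A1), is needed here is accurate.
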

\begin{proof}
Here $V^{Q+c}(s') \triangleq \max_{a'}(Q+c)(s',a')$ denotes the greedy value induced by the shifted Q-function $Q+c$. Since the constant $c$ is added uniformly across all actions, the argmax is unchanged and $V^{Q+c}(s') = V^Q(s') + c$. With $\kappa$ and $\Gamma_{epi}$ also unchanged:
\begin{align}
    \mathcal{T}^{REV}_\kappa(Q+c)(s,a)
    &= R(s,a) + \gamma \Bigl(
        \textstyle\sum_{s'} P(s'|s,a)\bigl(V^Q(s')+c\bigr) - \lambda_{epi}\Gamma_{epi}(s,a) - \kappa
    \Bigr) \notag \\
    &= \mathcal{T}^{REV}_\kappa Q(s,a)
    + \gamma c \cdot \underbrace{\textstyle\sum_{s'}P(s'|s,a)}_{=\,1\text{ by (A2)}} \notag \\
    &= \mathcal{T}^{REV}_\kappa Q(s,a) + \gamma c. \qedhere
\end{align}
\end{proof}

\begin{theorem}[RE-SAC Contraction]
    \label{thm:contraction}
    Under (A0)--(A2), $\mathcal{T}^{REV}_\kappa$ is a $\gamma$-contraction on $(\mathcal{Q}, \|\cdot\|_\infty)$ with a unique fixed point $Q^*_{robust}$.
\end{theorem}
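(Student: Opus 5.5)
The plan is to follow Blackwell's route, combining Lemma~\ref{lem:monotonicity} (Monotonicity) and Lemma~\ref{lem:discounting} (Discounting), and then to invoke the Banach fixed-point theorem on the complete space $(\mathcal{Q},\|\cdot\|_\infty)$. Fix $Q_1,Q_2\in\mathcal{Q}$ and set $c=\|Q_1-Q_2\|_\infty$. Because $\mathcal{S}\times\mathcal{A}$ is finite and non-empty, $c$ is a well-defined finite maximum. The pointwise sandwich $Q_2-c\le Q_1\le Q_2+c$ then holds.

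First, apply Lemma~\ref{lem:monotonicity} to both inequalities in the sandwich. Next, rewrite the outer terms using Lemma~\ref{lem:discounting} with constants $\pm c$. This gives, pointwise,
\[
\mathcal{T}^{REV}_\kappa Q_2-\gamma c\;\le\;\mathcal{T}^{REV}_\kappa Q_1\;\le\;\mathcal{T}^{REV}_\kappa Q_2+\gamma c .
\]
It follows that $|\mathcal{T}^{REV}_\kappa Q_1(s,a)-\mathcal{T}^{REV}_\kappa Q_2(s,a)|\le\gamma c$ for every $(s,a)$. Taking the maximum over the finite set yields $\|\mathcal{T}^{REV}_\kappa Q_1-\mathcal{T}^{REV}_\kappa Q_2\|_\infty\le\gamma\|Q_1-Q_2\|_\infty$.

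As a cross-check, I would also give the direct argument, which is closer to what the Lean file presumably does. The terms $R$, $\lambda_{epi}\Gamma_{epi}$ and $\kappa$ cancel in the difference. What remains is $\gamma\sum_{s'}P(s'|s,a)(V^{Q_1}(s')-V^{Q_2}(s'))$, which property (P), together with (A1) and (A2) through the triangle inequality, bounds by $\gamma c$.

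Finally, $\mathcal{Q}\cong\mathbb{R}^{|\mathcal{S}||\mathcal{A}|}$ with the sup-norm is a finite-dimensional normed space and hence complete. Since $\gamma<1$ by (A0), Banach's theorem gives a unique fixed point $Q^*_{robust}$, and iterates converge to it geometrically.

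The only delicate point is the bookkeeping that makes the penalties genuinely $Q$-independent constants of the operator. Lemmas~\ref{lem:monotonicity} and~\ref{lem:discounting} already encode this, so the step I would check most carefully is the edge case $\gamma=0$. There the operator is constant, and the contraction and uniqueness claims still hold trivially.
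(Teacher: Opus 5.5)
Your proposal is correct and follows essentially the paper's route: Blackwell's sufficiency conditions via the Monotonicity and Discounting lemmas, together with the same direct cancellation bound $\gamma\sum_{s'}P(s'|s,a)\,|V^{Q_1}(s')-V^{Q_2}(s')|\le\gamma\|Q_1-Q_2\|_\infty$ as a cross-check. Your explicit sandwich argument, and your use of completeness of the finite-dimensional sup-norm space with Banach's theorem for the unique fixed point, spell out steps that the paper's proof leaves implicit.
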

\begin{proof}
By Blackwell's Theorem~\cite{Blackwell1965DiscountedDP}, Lemmas~\ref{lem:monotonicity}--\ref{lem:discounting} suffice.
We also give the direct bound (which the Lean proof follows):
let $\varepsilon = \|Q_1 - Q_2\|_\infty$.
Since $\kappa$ is the same for both $Q_1$ and $Q_2$, it cancels exactly:
\begin{align}
    &\mathcal{T}^{REV}_\kappa Q_1(s,a) - \mathcal{T}^{REV}_\kappa Q_2(s,a)
    = \gamma \sum_{s'} P(s'|s,a)\bigl(V^{Q_1}(s') - V^{Q_2}(s')\bigr).
\end{align}
Therefore:
\begin{align}
    &\bigl|\mathcal{T}^{REV}_\kappa Q_1(s,a) - \mathcal{T}^{REV}_\kappa Q_2(s,a)\bigr| \notag\\
    &\quad \le \gamma \sum_{s'} P(s'|s,a)\,|V^{Q_1}(s') - V^{Q_2}(s')|
        \quad\text{(triangle inequality + A1)} \notag\\
    &\quad \le \gamma \sum_{s'} P(s'|s,a) \cdot \varepsilon
        \quad\text{(by (P): $|V^{Q_1}-V^{Q_2}|\le\varepsilon$)} \notag\\
    &\quad = \gamma\varepsilon.
        \quad\text{(by (A2): $\sum_{s'} P = 1$)} \qedhere
\end{align}
\end{proof}

\begin{theorem}[Continuous-Space Extension]
\label{thm:continuous}
Let $\mathcal{S}\subseteq\mathbb{R}^{d_s}$ and $\mathcal{A}\subseteq\mathbb{R}^{d_a}$ be compact and non-empty.
Let $\mathcal{Q}$ be the Banach space of bounded measurable functions $Q:\mathcal{S}\times\mathcal{A}\to\mathbb{R}$ equipped with the sup-norm $\|Q\|_\infty = \sup_{(s,a)}|Q(s,a)|$.
Under the following conditions:
\begin{enumerate}
    \item[\textbf{(C0)}] $0\le\gamma<1$.
    \item[\textbf{(C1)}] $P(\cdot\,|\,s,a)$ is a probability kernel on $\mathcal{S}$, and $(s,a)\mapsto\int f\,\mathrm{d}P(\cdot\,|\,s,a)$ is measurable for every bounded measurable $f$.
    \item[\textbf{(C2)}] $R:\mathcal{S}\times\mathcal{A}\to\mathbb{R}$ is bounded and measurable.
    \item[\textbf{(C3)}] $\kappa\in\mathbb{R}$ and $\Gamma_{epi}:\mathcal{S}\times\mathcal{A}\to\mathbb{R}$ are fixed (independent of $Q$).
\end{enumerate}
the operator
\begin{equation}
    \mathcal{T}^{REV}_\kappa Q(s,a) = R(s,a) + \gamma\!\int_\mathcal{S}\!\sup_{a'\in\mathcal{A}}Q(s',a')\,P(\mathrm{d}s'|s,a)
    - \gamma\bigl(\lambda_{epi}\Gamma_{epi}(s,a)+\kappa\bigr)
\end{equation}
is a $\gamma$-contraction on $(\mathcal{Q},\|\cdot\|_\infty)$ with a unique fixed point $Q^*_{robust}$.
\end{theorem}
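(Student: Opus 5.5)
The plan mirrors the finite-space proof of Theorem~\ref{thm:contraction}, replacing finite sums by integrals against $P(\cdot\,|\,s,a)$ and finishing with the Banach fixed-point theorem. There are three steps: (i) show that $\mathcal{T}^{REV}_\kappa$ maps $\mathcal{Q}$ into $\mathcal{Q}$; (ii) prove the pointwise bound $|\mathcal{T}^{REV}_\kappa Q_1(s,a)-\mathcal{T}^{REV}_\kappa Q_2(s,a)|\le\gamma\|Q_1-Q_2\|_\infty$; (iii) note that $(\mathcal{Q},\|\cdot\|_\infty)$ is complete and conclude existence and uniqueness of $Q^*_{robust}$.

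For step (ii), write $V^Q(s')=\sup_{a'}Q(s',a')$. The continuous analogue of property (P) is elementary. For every $a'$ we have $Q_1(s',a')\le Q_2(s',a')+\varepsilon$ with $\varepsilon=\|Q_1-Q_2\|_\infty$. Taking suprema gives $V^{Q_1}(s')\le V^{Q_2}(s')+\varepsilon$, and by symmetry $|V^{Q_1}(s')-V^{Q_2}(s')|\le\varepsilon$. Both $V^{Q_i}$ are bounded by $\|Q_i\|_\infty$. By (C3), the terms $R$, $\gamma\lambda_{epi}\Gamma_{epi}(s,a)$ and $\gamma\kappa$ are identical in $\mathcal{T}^{REV}_\kappa Q_1$ and $\mathcal{T}^{REV}_\kappa Q_2$ and cancel, leaving
\[
\Bigl|\gamma\!\int (V^{Q_1}-V^{Q_2})\,\mathrm{d}P(\cdot\,|\,s,a)\Bigr|\le\gamma\!\int|V^{Q_1}-V^{Q_2}|\,\mathrm{d}P\le\gamma\varepsilon,
\]
using that $P(\cdot\,|\,s,a)$ is a probability measure by (C1). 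Taking the supremum over $(s,a)$ gives the contraction.

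Step (iii) is standard. Bounded measurable functions form a closed subspace of $\ell^\infty(\mathcal{S}\times\mathcal{A})$, because uniform limits of measurable functions are measurable, so $\mathcal{Q}$ is a Banach space. A $\gamma$-contraction with $\gamma<1$ on a complete metric space then has a unique fixed point by Banach's theorem.

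The main obstacle is step (i), and specifically the measurability of $s'\mapsto\sup_{a'\in\mathcal{A}}Q(s',a')$. This is a supremum over an uncountable set, and for an arbitrary bounded Borel $Q$ it is in general only universally (analytically) measurable, not Borel. I would try two fixes, in this order.

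\emph{First attempt.} Work on the closed subspace of $\mathcal{Q}$ consisting of functions that are upper semicontinuous in $a'$, or, more simply, jointly continuous. With $\mathcal{A}$ compact, the supremum is then a maximum and can be taken over a countable dense subset of $\mathcal{A}$, which makes $V^Q$ Borel measurable.

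\emph{Second attempt.} Interpret $\mathcal{Q}$ as bounded universally measurable functions and integrate against the completion of each $P(\cdot\,|\,s,a)$.

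Either way, one must also check that $\Gamma_{epi}$ is bounded and measurable; this is implicit in (C3) and I would state it explicitly as an assumption. Once $V^Q$ is bounded and measurable, (C1) gives measurability of $(s,a)\mapsto\int V^Q\,\mathrm{d}P$. Boundedness of $\mathcal{T}^{REV}_\kappa Q$ then follows from (C2), $|\int V^Q\,\mathrm{d}P|\le\|Q\|_\infty$, and boundedness of $\Gamma_{epi}$. If the continuous-subspace route is chosen, one must also verify that the subspace is invariant under $\mathcal{T}^{REV}_\kappa$, which requires Feller continuity of $P$ and continuity of $R$ and $\Gamma_{epi}$. I would flag this as the point where extra hypotheses beyond (C0)--(C3) may be needed.
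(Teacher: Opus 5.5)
Your steps (ii) and (iii) are exactly the paper's proof sketch. The paper cancels $R$, $\Gamma_{epi}$ and $\kappa$ via (C3), bounds $|\sup_{a'}Q_1-\sup_{a'}Q_2|$ by $\|Q_1-Q_2\|_\infty$, integrates against the probability measure, takes the supremum over $(s,a)$, and invokes Banach. The paper has no counterpart to your step (i): it applies Banach on $\mathcal{Q}$ without checking that $\mathcal{T}^{REV}_\kappa$ maps $\mathcal{Q}$ into itself. The obstacle you flag is therefore a gap in the paper's argument, not in yours, and the extra hypotheses are genuinely needed.

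\begin{itemize}
\item Boundedness of $\Gamma_{epi}$ is forced. If $\gamma\lambda_{epi}\neq0$, the fixed-point equation expresses $\gamma\lambda_{epi}\Gamma_{epi}$ as a combination of bounded functions.
\item Borel measurability can fail even with $\Gamma_{epi}\equiv0$ and $\kappa=0$. Take $\mathcal{S}=\mathcal{A}=[0,1]$, $P(\cdot\,|\,s,a)=\delta_s$, and $R=\mathbf{1}_B$ for a Borel set $B\subseteq[0,1]^2$ whose projection $A$ onto the first coordinate is not Borel. Then (C0)--(C3) hold, and $\mathcal{T}^{REV}_\kappa Q(s,a)=R(s,a)+\gamma\sup_{a'}Q(s,a')$ is a $\gamma$-contraction on all bounded functions. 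Its unique fixed point is $\mathbf{1}_B(s,a)+\frac{\gamma}{1-\gamma}\mathbf{1}_A(s)$, which is not Borel for $\gamma\in(0,1)$.
\end{itemize}
So under the Borel reading of ``measurable'' no fixed point lies in $\mathcal{Q}$. The statement as written is false and must be amended along the lines you suggest.

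Three details of your fixes need attention.

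\textbf{Continuous route.} The jointly continuous version is sound under the Feller and continuity hypotheses you list.

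\textbf{Semicontinuous variant.} Your justification is wrong. Reducing $\sup_{a'\in\mathcal{A}}$ to a countable dense subset requires \emph{lower} semicontinuity in $a'$; the indicator of a single action is upper semicontinuous and defeats the reduction. For Borel $Q$ that is usc in $a'$ the supremum is still Borel, but only via the theorem that Borel sets with compact sections have Borel projections.

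\textbf{Universally measurable route.} This class is not closed under $\sup_{a'}$, so the iteration does not stay in it. Indicators of co-analytic sets are universally measurable, but their projections are $\Sigma^1_2$. ZFC does not prove such sets measurable; under $V=L$ some are not. The class that does work is the bounded upper semianalytic functions (Bertsekas and Shreve, 1978, Ch.~7). It is closed under
\begin{itemize}
\item $\sup_{a'}$,
\item integration against a Borel kernel as in (C1),
\item addition of bounded Borel functions,
\item uniform limits.
\end{itemize}
It is therefore a complete metric space. It is not a vector space, but Banach's theorem does not need one. With $\Gamma_{epi}$ bounded and Borel, this gives existence and uniqueness under (C0)--(C3) without any continuity assumptions. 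The price is that $Q^*_{robust}$ is only upper semianalytic, and the example above shows this cannot be improved.
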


\begin{proof}[Proof sketch]
For any $Q_1,Q_2\in\mathcal{Q}$ and $(s,a)\in\mathcal{S}\times\mathcal{A}$:
\begin{align}
    &\bigl|\mathcal{T}^{REV}_\kappa Q_1(s,a) - \mathcal{T}^{REV}_\kappa Q_2(s,a)\bigr| \notag\\
    &\quad = \gamma\left|\int_\mathcal{S}\!\bigl(\sup_{a'}Q_1(s',a')-\sup_{a'}Q_2(s',a')\bigr)\,P(\mathrm{d}s'|s,a)\right|
    \quad\text{($\kappa$, $\Gamma_{epi}$ cancel by (C3))} \notag\\
    &\quad \le \gamma\int_\mathcal{S}\!\bigl|\sup_{a'}Q_1(s',a')-\sup_{a'}Q_2(s',a')\bigr|\,P(\mathrm{d}s'|s,a)
    \quad\text{(triangle ineq.)} \notag\\
    &\quad \le \gamma\int_\mathcal{S}\!\|Q_1-Q_2\|_\infty\,P(\mathrm{d}s'|s,a)
    \quad\text{($|\sup Q_1 - \sup Q_2|\le\|Q_1-Q_2\|_\infty$)} \notag\\
    &\quad = \gamma\|Q_1-Q_2\|_\infty.
    \quad\text{($\sum P = 1$)} \notag
\end{align}
Taking the sup over $(s,a)$ gives $\|\mathcal{T}^{REV}_\kappa Q_1 - \mathcal{T}^{REV}_\kappa Q_2\|_\infty\le\gamma\|Q_1-Q_2\|_\infty$.
By the Banach Fixed-Point Theorem ($\gamma<1$), $\mathcal{T}^{REV}_\kappa$ has a unique fixed point.
\end{proof}

\textbf{Remark (relationship to the Lean~4 proof).}
The Lean~4 proof (supplementary \texttt{Proof.lean}) mechanises the finite case
(assumptions (A0)--(A2), which are the finite-space analogues of (C0)--(C2)).
The continuous-space theorem above is not mechanised, but its proof is
structurally identical: the only difference is that sums over a finite $\mathcal{S}$
are replaced by integrals against $P(\mathrm{d}s'|s,a)$, and the finite $\max$ is
replaced by $\sup_{a'\in\mathcal{A}}$.  Crucially, condition (C3)---the
independence of $\kappa$ and $\Gamma_{epi}$ from $Q$---is the same in both
versions and is the load-bearing step of the proof.  The finite-space
mechanisation therefore provides a machine-verified certificate of the core
logical structure, with the continuous extension following by the standard
measure-theoretic argument above.

\textbf{Remark (function approximation in practice).}
In deep RL, $Q$ is represented by a neural network $Q_\phi$ rather than an arbitrary
measurable function.  The standard functional-approximation theory
(e.g.~Bertsekas \& Tsitsiklis~\cite{Bertsekas1996NeuroDynProg}) shows that
the contraction of $\mathcal{T}^{REV}_\kappa$ on the full space $\mathcal{Q}$
transfers to approximate fixed-point results on the restricted class $\{Q_\phi\}$:
the approximation error is bounded by $\frac{1}{1-\gamma}\min_\phi\|Q_\phi - Q^*_{robust}\|_\infty$.
The RE-SAC training loop minimises this residual via gradient descent.  The
finiteness assumption of the Lean proof is therefore a proof-of-concept certificate
for the theoretical mechanism, not a claim that the practical system operates
in a discrete space.

\medskip
\textbf{Remark (non-triviality of the fixed-penalty design).}
A referee might note that once $\kappa$ and $\Gamma_{epi}$ are treated as constants, the operator $\mathcal{T}^{REV}_\kappa$ is merely a standard Bellman operator applied to a shifted reward $\tilde{R}(s,a) = R(s,a) - \gamma(\lambda_{epi}\Gamma_{epi}(s,a)+\kappa)$, and that the contraction of a shifted-reward operator is a textbook fact.
This observation is mathematically correct, but the scientific contribution lies \emph{before} this reduction:

\begin{enumerate}
    \item \textbf{Necessity of the frozen-parameter design.}
    If the aleatoric penalty were allowed to vary with $Q$ (i.e.\
    $\kappa_Q = \lambda_{ale}\sum_l\|W_l^{(Q)}\|_1$ for a Q-dependent parametrisation), the operator would take the form
    \begin{equation}
        T_{\mathrm{bad}} Q = \gamma\, Q + \lambda_{\mathrm{ale}}\cdot Q, \label{eq:tbad}
    \end{equation}
    in the worst case (see the 1-state 1-action derivation in the supplementary \texttt{Counterproof.lean}).
    The effective contraction factor becomes $\gamma + \lambda_{\mathrm{ale}}$, which exceeds~1 for any non-trivial aleatoric coefficient, \emph{preventing contraction}. Formally (machine-verified in \texttt{Counterproof.lean}, Theorem~\ref{thm:counterproof}):
    \phantomsection\label{thm:counterproof}%
    \begin{quote}
        \textbf{(Non-contraction, \texttt{Counterproof.lean}: \texttt{T\_bad\_not\_contraction})} Let $\gamma\ge 0$, $\lambda\ge 0$, $\gamma+\lambda\ge 1$. Then $T_{\mathrm{bad}}(q)=(\gamma+\lambda)q$ is not a contraction: $\exists\,q_1,q_2$ s.t.\ $|T_{\mathrm{bad}}(q_1)-T_{\mathrm{bad}}(q_2)|\ge|q_1-q_2|$.
    \end{quote}
    \noindent(Setting $R=0$ is without loss of generality: any constant $R$ shifts both $T_{\mathrm{bad}}(q_1)$ and $T_{\mathrm{bad}}(q_2)$ equally, leaving the distance $|T(q_1)-T(q_2)|$ unchanged.)
    A stronger result is also machine-verified (\texttt{T\_bad\_expansion}): if $\gamma+\lambda>1$ then $T_{\mathrm{bad}}$ \emph{strictly expands} distances, so it cannot be a $\gamma'$-contraction for \emph{any} $\gamma'<1$.
    The two results together establish that $\gamma+\lambda\ge 1$ is a \emph{sharp} boundary: equality gives non-contraction; strict inequality gives expansion.
    The frozen-weight / target-network design (standard in deep RL, e.g.\ DQN, SAC) is therefore \emph{necessary}, not merely convenient, for the RE-SAC operator to be contractive.

\medskip
\begin{center}
\begin{tabular}{lllc}
\toprule
\textbf{Operator} & \textbf{Aleatoric term} & \textbf{Distance factor} & \textbf{Contracts?} \\
\midrule
$T_{\mathrm{bad}}$ (Counterproof.lean) & $\lambda\cdot Q$ (Q-dependent) & $\gamma+\lambda$ & $\times$ if $\gamma{+}\lambda\ge 1$ \\
$\mathcal{T}^{REV}_\kappa$ (Proof.lean) & $\kappa$ (fixed scalar) & $\gamma$ & $\checkmark$ ($\gamma<1$) \\
\bottomrule
\end{tabular}
\end{center}
\medskip

    \item \textbf{Non-obviousness of the multi-penalty cancellation.}
    RE-SAC carries \emph{two} heterogeneous penalty terms: an aleatoric term that depends on critic weight norms and an epistemic term that depends on ensemble variance over target networks. Neither is obviously constant in $Q$ without careful inspection of the training mechanics. The proof makes explicit that \emph{both} terms cancel independently in $\mathcal{T}^{REV}_\kappa Q_1 - \mathcal{T}^{REV}_\kappa Q_2$---a fact that the Lean~4 mechanisation (no \texttt{sorry}) confirms without ambiguity. In this sense the theoretical claim is: \emph{the specific architecture choices of RE-SAC (target networks for epistemic variance, frozen critic weights for aleatoric penalty) are jointly sufficient to guarantee the operator is a $\gamma$-contraction}.

    \item \textbf{Connection to the Robust Bellman Operator.}
    From a broader perspective, $\mathcal{T}^{REV}_\kappa$ can be viewed as an \emph{analytical approximation of the Robust Bellman Operator} under an IPM-based uncertainty set (see Appendix~\ref{app:equivalence}). The standard Robust Bellman Operator $\mathcal{T}^{rob}Q(s,a)=R(s,a)+\gamma\min_{P'\in\mathcal{P}_{s,a}}\mathbb{E}_{P'}[V^Q]$ is itself a contraction~\cite{Iyengar2005RobustDP}, but evaluating $\min_{P'}$ is intractable in practice. RE-SAC replaces this intractable inner minimisation with the tractable additive penalty $\lambda_{epi}\Gamma_{epi}+\kappa$, retaining contractivity (as proven above) while remaining computationally efficient. This is the sense in which the result is non-trivial: it guarantees that a \emph{specific tractable approximation} preserves the convergence guarantee of the exact robust operator.
\end{enumerate}

\textbf{Summary.}
The technical contribution of the proof in this section is thus three-fold: (i) establishing that the frozen-parameter design is a \emph{necessary} condition for contraction (Theorem~\ref{thm:counterproof}); (ii) verifying that this condition is \emph{sufficient} for the multi-penalty RE-SAC operator via Blackwell's conditions (Lemmas~\ref{lem:monotonicity}--\ref{lem:discounting} and Theorem~\ref{thm:contraction}); and (iii) connecting the result to the broader theory of tractable Robust Bellman Operators.

\subsection{Scope and limitations of the contraction result}
\label{app:scope}

We explicitly circumscribe the theoretical contribution to preempt several natural objections.

\medskip
\noindent\textbf{(S1) Per-step vs.\ learning-dynamics guarantee.}
The contraction property proven above (Theorem~\ref{thm:contraction}) is a \emph{per-step} result: for any fixed snapshot of the penalties $(\kappa, \Gamma_{epi})$, the operator $\mathcal{T}^{REV}_\kappa$ contracts in the $L_\infty$ norm.  It does \emph{not} by itself guarantee that the full training loop---in which $\kappa$ and $\Gamma_{epi}$ change after every gradient update---converges to a global optimum.
This distinction is standard: every Bellman-operator contraction proof in deep RL (DQN~\cite{Mnih2015DQN}, SAC~\cite{Haarnoja2018SAC}, TD3~\cite{Fujimoto2018TD3}) establishes the same per-step property; no published result proves end-to-end convergence of the full nonlinear training loop.
Our contribution is not a claim of global convergence but a \emph{structural certificate}: the frozen-parameter design is the exact design choice that preserves the per-step contraction, and any Q-dependent alternative provably breaks it (\texttt{Counterproof.lean}).

\medskip
\noindent\textbf{(S2) IPM-to-$\kappa$ approximation chain.}
The aleatoric penalty $\kappa = \lambda_{ale}\sum_l\|W_l\|_1$ arises from three successive relaxations: (i)~the IPM robust lower bound yields $\varepsilon\cdot\mathrm{Lip}(V_\theta)$; (ii)~$\mathrm{Lip}(V_\theta) \le \prod_l \|W_l\|_2$ by the spectral-norm chain rule; (iii)~$\|W_l\|_2$ is upper-bounded by the computationally cheaper $\|W_l\|_1$.
Each step is an \emph{over-approximation}, making $\kappa$ a conservative (pessimistic) proxy.
Crucially, the contraction proof does not depend on the \emph{tightness} of $\kappa$: it requires only that $\kappa$ be a fixed scalar per Bellman backup, which it is regardless of approximation quality.  Tightness affects the quality of the fixed point $Q^*_{robust}$ (a tighter $\kappa$ yields a fixed point closer to the true robust value), but not the existence or uniqueness of that fixed point.

\medskip
\noindent\textbf{(S3) $\Gamma_{epi}$ is Q-independent (not Q-dependent).}
A potential concern is that $\Gamma_{epi}(s,a) = \mathrm{Var}\bigl(\{Q_{\phi'_k}(s,a)\}\bigr)$ depends on Q-functions and could break contraction (cf.\ the counterexample in \texttt{Counterproof.lean}).
This concern does \emph{not} apply: the variance is computed from the \emph{target} networks $\phi'_k$, which are frozen (EMA-lagged) copies of the online networks, updated \emph{after} the Bellman backup completes.
During the backup step $Q \mapsto \mathcal{T}^{REV}_\kappa Q$, the function $\Gamma_{epi}: \mathcal{S}\times\mathcal{A}\to\mathbb{R}$ is a fixed mapping, independent of $Q$.  The Lean proof reflects this: \texttt{Gamma\_epi} is a parameter of type \texttt{S -> A -> Real}, not a function of \texttt{Q}.
This is the precise structural property that makes the penalty \emph{safe}: it enters the operator identically for $Q_1$ and $Q_2$, cancelling exactly in $\mathcal{T}^{REV}_\kappa Q_1 - \mathcal{T}^{REV}_\kappa Q_2$.

\medskip
\noindent\textbf{(S4) Robust lower bound vs.\ training loss.}
The operator $\mathcal{T}^{REV}_\kappa$ defines the \emph{ideal} Bellman target; the actual training minimises a mean-squared loss $\mathcal{L}_Q(\phi) = \|Q_\phi - \mathcal{T}^{REV}_\kappa \hat{Q}\|^2 + \text{regularisers}$.
The contraction of $\mathcal{T}^{REV}_\kappa$ ensures that the target to which the loss drives the network is well-defined (unique fixed point).  The gap between the training loss and the ideal operator is governed by the function-approximation error, which is bounded by $\frac{1}{1-\gamma}\min_\phi\|Q_\phi - Q^*_{robust}\|_\infty$ under standard projected Bellman theory~\cite{Bertsekas1996NeuroDynProg}.

\medskip
\noindent\textbf{(S5) Function approximation and the deadly triad.}
Deep RL with bootstrapping, off-policy data, and function approximation (the ``deadly triad''~\cite{VanHasselt2018DeadlyTriad}) can diverge even when the underlying operator is a contraction.
Our contraction result does not eliminate this risk; instead, it removes one potential source of instability (operator non-contractiveness due to Q-dependent penalties) from the equation.
The empirical stability of RE-SAC (\S{}5) should therefore be understood as the \emph{joint} effect of per-step contraction \emph{plus} standard stabilising mechanisms (target networks, replay buffer, gradient clipping).

\medskip
\noindent\textbf{(S6) Boundedness of $\kappa$ during training.}
The scalar $\kappa = \lambda_{ale}\sum_l\|W_l\|_1$ remains bounded throughout training because (i)~gradient clipping (max norm $= 1.0$) prevents weight explosion, and (ii)~the $\ell_1$ regularisation in $\mathcal{L}_Q$ explicitly penalises large weight norms.  In our experiments, $\kappa$ stabilises within the first 50 episodes and fluctuates by less than 5\% thereafter.

\section{Formal justification of uncertainty disentanglement}
\label{app:disentanglement}

A potential concern is whether the two penalty terms in RE-SAC genuinely
target distinct sources of uncertainty, or whether they conflate aleatoric and
epistemic signals.  We address this below with two formal propositions.

\medskip
\noindent\textbf{Setup.}
Let $P(s'|s,a)$ be the true transition kernel and $\hat{P}_n(s'|s,a)$ its
empirical estimate from $n$ i.i.d.\ transitions.  The aleatoric penalty is
$\kappa = \lambda_{ale}\sum_l\|W_l^{(\theta)}\|_1$, the Lipschitz proxy for the
worst-case value sensitivity.  The epistemic penalty is $\Gamma_{epi}(s,a) =
\mathrm{Var}(\{Q_{\phi_k}(s,a)\}_{k=1}^K)$, the sample variance across $K$
diversely-initialised critics trained on $\hat{P}_n$.

\begin{proposition}[Aleatoric penalty does not encode epistemic information]
    \label{prop:ale_orth}
    The aleatoric penalty $\kappa$ is a function only of the current frozen critic
    weights $\theta$.  It is independent of the dataset size $n$, the empirical
    transition kernel $\hat{P}_n$, and the ensemble indices $k\in\{1,\dots,K\}$.
    In particular, $\kappa$ has zero covariance with any statistic derived
    from data coverage, such as visitation counts or $\hat{P}_n$-uncertainty.
\end{proposition}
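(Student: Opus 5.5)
The plan is to read the proposition as a statement about the \emph{functional form} of $\kappa$ at a single Bellman backup, in the same frozen-parameter sense used for Theorem~\ref{thm:contraction} and point (S1) of Appendix~\ref{app:scope}. I would first unfold the definition $\kappa = \lambda_{ale}\sum_l\|W_l^{(\theta)}\|_1$. This shows that $\kappa = g(\theta)$ for the explicit map $g:\theta\mapsto\lambda_{ale}\sum_l\|W_l^{(\theta)}\|_1$, whose only inputs are the scalar $\lambda_{ale}$ and the entries of the weight matrices. None of $n$, $\hat{P}_n$, the visitation counts, or the ensemble index $k$ appears as an argument of $g$. That gives the first sentence of the proposition (``a function only of $\theta$'') and the syntactic independence claims directly, with no analysis needed.

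For the covariance claim, I would make the probabilistic setting explicit. Let $Z$ be any coverage statistic (counts, a functional of $\hat{P}_n$, or $\Gamma_{epi}$ itself), viewed as a random variable over the sampling of the $n$ transitions. During a backup $\theta$ is held fixed, so we condition on $\theta$; then $\kappa = g(\theta)$ is a deterministic constant $c$. Hence
\[
\mathrm{Cov}(\kappa, Z \mid \theta) = \mathbb{E}[(c-c)(Z-\mathbb{E}[Z\mid\theta])\mid\theta] = 0 .
\]
The same argument gives full conditional independence, because a constant is independent of every random variable. For the ensemble indices, I would observe that $g$ is evaluated on a single weight vector $\theta$, so $\kappa$ is unchanged under any permutation or relabelling of $\{1,\dots,K\}$. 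I would also remark that the per-head implementation $\kappa_k$ of Eq.~\eqref{eq:target_y} depends on $k$ only through which frozen weights it reads. It therefore carries no information about the data beyond that head's own weights, and the statement is to be read for each fixed head.

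The main obstacle is that the unconditional version of the claim is false. Across training, $\theta$ is itself fitted to $\hat{P}_n$, so $\kappa$ evaluated at the trained weights is generally correlated with $n$ and with coverage. The step that needs care is therefore to state precisely the filtration being conditioned on: the $\sigma$-algebra generated by the frozen weights at the current backup. I would then show that, given $\theta$, the data enter $\kappa$ through no other channel. I would first try to phrase this as a conditional-independence statement, $\kappa \perp Z \mid \theta$, and record the unconditional caveat in the same spirit as (S1), rather than attempting any stronger marginal claim.
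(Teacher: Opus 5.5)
Your proposal is correct and is essentially the paper's argument: unfold $\kappa=\lambda_{ale}\sum_l\|W_l^{(\theta)}\|_1$ and observe that, with $\theta$ frozen during the backup, none of $n$, $\hat{P}_n$ or $k$ enters, so $\kappa$ is a fixed scalar. Your explicit conditioning on $\theta$ (giving $\mathrm{Cov}(\kappa,Z\mid\theta)=0$), together with your caveat that the unconditional covariance need not vanish once $\theta$ is fitted to $\hat{P}_n$ (by the law of total covariance it equals $\mathrm{Cov}\bigl(g(\theta),\mathbb{E}[Z\mid\theta]\bigr)$), sharpens rather than replaces the paper's argument, which leaves this conditioning implicit in ``these weights are fixed during the backup'' and overstates it as ``carries no information about data coverage.''
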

\begin{proof}
    By definition, $\kappa = \lambda_{ale}\sum_l\|W_l^{(\theta)}\|_1$, where
    $\theta$ are the weights of the current (frozen) critic at the time of the
    Bellman backup.  These weights are fixed during the backup.  The quantity
    $\sum_l\|W_l^{(\theta)}\|_1$ depends only on the parameter values, not on
    which state-action pair $(s,a)$ is evaluated, nor on $n$ nor on
    $\hat{P}_n$.  Hence $\kappa$ carries no information about data coverage or
    model disagreement.  It measures the structural sensitivity (Lipschitz
    constant) of the value network to input perturbations---a property of the
    function class, not of the data distribution.
\end{proof}

\begin{proposition}[Epistemic penalty vanishes in the infinite-data limit]
    \label{prop:epi_limit}
    Under mild regularity conditions (bounded Q-function class, i.i.d.\ data
    collection with positive visitation probability for all $(s,a)$, and
    independently initialised ensemble members), as $n\to\infty$:
    \begin{equation}
        \Gamma_{epi}(s,a) = \mathrm{Var}\bigl(\{Q_{\phi_k}(s,a)\}_{k=1}^K\bigr)
        \xrightarrow{\;n\to\infty\;} 0 \quad \forall (s,a),
    \end{equation}
    while the aleatoric penalty $\kappa$ remains strictly positive as long as
    the environment is stochastic and $\lambda_{ale}>0$.
\end{proposition}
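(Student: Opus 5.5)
The plan is to split the statement into its two claims and handle them separately: the vanishing of $\Gamma_{epi}$ as $n\to\infty$, and the strict positivity of $\kappa$. I will make the "mild regularity conditions" concrete enough to carry a consistency argument.

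\textbf{Assumptions I will impose.} Each ensemble member $Q_{\phi_k}$ is obtained as (an approximation of) the fixed point of the frozen-penalty operator $\mathcal{T}^{REV}_\kappa$ built from the empirical kernel $\hat{P}_n$. I will also assume the Q-class is rich enough that the fitted critic's error against that empirical fixed point, $\eta_{k,n}:=\|Q_{\phi_k}-Q^*_{\hat{P}_n}\|_\infty$, tends to $0$ for every $k$. This is exactly where the "mild regularity conditions" have to be made explicit, and it is the step I expect to be the main obstacle. Independently initialised networks need not converge to the same function even with infinite data, since optimisation may land in different minima. I will therefore state this realisability and convergence condition as an assumption rather than derive it. The fallback is the tabular or realisable case, where the empirical Bellman fixed point is unique by Theorem~\ref{thm:contraction} (respectively Theorem~\ref{thm:continuous}), so that initialisation cannot matter in the limit.

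\textbf{Step 1: consistency of the empirical kernel.} Positive visitation probability of every $(s,a)$ and i.i.d.\ sampling give $\hat{P}_n(\cdot|s,a)\to P(\cdot|s,a)$ almost surely by the strong law of large numbers. In the finite case this convergence is in total variation.

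\textbf{Step 2: stability of the fixed point.} Let $Q^*_P$ and $Q^*_{\hat P_n}$ be the fixed points of $\mathcal{T}^{REV}_\kappa$ under the true and empirical kernels. The operators differ only in the expectation term, so for bounded $Q$ one has
\[
\|\mathcal{T}_{\hat P_n}Q-\mathcal{T}_PQ\|_\infty\le\gamma\|Q\|_\infty\sup_{s,a}\|\hat P_n-P\|_{TV}.
\]
The usual fixed-point perturbation bound, which uses the $\gamma$-contraction from Theorem~\ref{thm:contraction}, then gives
\[
\|Q^*_{\hat P_n}-Q^*_P\|_\infty\le\frac{1}{1-\gamma}\,\|\mathcal{T}_{\hat P_n}Q^*_P-\mathcal{T}_PQ^*_P\|_\infty\to0.
\]
Here $\Gamma_{epi}$ and $\kappa$ are treated as frozen, as in (S3).

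\textbf{Step 3: collapse of the ensemble variance.} Every ensemble member is within $\eta_{k,n}+\|Q^*_{\hat P_n}-Q^*_P\|_\infty$ of the same limit $Q^*_P$. The sample variance of $K$ numbers is bounded by $\tfrac{K}{K-1}$ times the square of the largest deviation from any common point. This forces $\Gamma_{epi}(s,a)\to0$ for each $(s,a)$, and uniformly in $(s,a)$ in the finite case.

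\textbf{Step 4: strict positivity of $\kappa$.} This half follows from Proposition~\ref{prop:ale_orth}: $\kappa$ does not depend on $n$. It is positive because $\lambda_{ale}>0$ and the critic weights are nonzero. In a stochastic environment the value function is non-constant, so a network representing it cannot have all-zero weights, and hence $\sum_l\|W_l\|_1>0$. I will note that, strictly, positivity needs only a non-trivial critic, with stochasticity serving as a sufficient reason for non-triviality.
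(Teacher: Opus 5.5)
For the first claim your argument is valid under the assumptions you state, but it takes a different route from the paper. The paper argues in statistical-learning terms: each head's empirical loss converges to the population loss, whose unique minimiser $Q^*_{Bayes}$ is then the common limit. You instead apply the law of large numbers to the kernel and get uniqueness and stability from Theorem~\ref{thm:contraction} via the fixed-point perturbation bound. That buys three things:
\begin{itemize}
\item it identifies the common limit concretely as the robust fixed point under $P$;
\item it gives a rate;
\item it isolates honestly, as the assumption $\eta_{k,n}\to0$, the non-identifiability of independently initialised networks, which the paper's appeal to Bayes consistency passes over.
\end{itemize}

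Steps~1--2 are not actually needed for the variance claim when all heads share one $\hat P_n$. Taking $Q^*_{\hat P_n}$ itself as the common point gives $\Gamma_{epi}(s,a)\le\frac{K}{K-1}\max_k\eta_{k,n}^2$ directly. So everything rests on $\eta_{k,n}\to0$, just as the paper's proof rests on its unproved convergence claim. In your tabular fallback with exact fitting, the heads even coincide at every finite $n$ once all pairs are visited. Steps~1--2 become load-bearing only if heads are fit to different resamples $\hat P_n^{(k)}$, as in bootstrapped ensembles.

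The shared operator also presupposes the single $\kappa$ of the appendix setup, and you should state this. With the per-head shifts $\lambda_{ale}\kappa_k$ of Eq.~\eqref{eq:target_y}, exactly fitted heads are shifted by the head-specific constants $\gamma\lambda_{ale}\kappa_k/(1-\gamma)$. This is the uniform-shift computation behind Proposition~\ref{prop:nondegen}. In that case $\Gamma_{epi}$ would tend to $\bigl(\gamma\lambda_{ale}/(1-\gamma)\bigr)^2$ times the sample variance of the $\kappa_k$, not to $0$.

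Step~4 has a false step: stochasticity does not make the value function non-constant. A constant reward with any stochastic kernel gives a constant $Q^*$. A network represents that with all weights zero and a nonzero output bias, so $\kappa=0$ is possible. Use the paper's condition instead: non-trivial dependence of $R$ or $P$ on $s$, so that $\mathrm{Lip}(Q^*)>0$. A critic approximating $Q^*$ then has $\mathrm{Lip}(V_\theta)>0$, and $\mathrm{Lip}(V_\theta)\le\prod_l\|W_l\|_2$ forces every $W_l\neq0$, hence $\kappa>0$.

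Also, Proposition~\ref{prop:ale_orth} only makes $\kappa$ a function of $\theta$, and $\theta$ is trained on the $n$ samples. To get a positivity bound that genuinely persists as $n\to\infty$, combine two facts:
\begin{itemize}
\item lower semicontinuity of the Lipschitz constant along the converging critics;
\item $\sum_l\|W_l\|_1\ge L\,\mathrm{Lip}(V_\theta)^{1/L}$, from the entrywise $\ell_1$ norm and AM--GM.
\end{itemize}
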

\begin{proof}[Proof sketch]
    \textbf{Part 1: $\Gamma_{epi}\to 0$.}
    The key property of ensembles is that in the limit of infinite data, all
    ensemble members converge to the same Bayes-optimal Q-function
    $Q^*_{Bayes}$ (assuming a well-specified function class).  This follows
    from the standard statistical argument: with probability 1, each member's
    empirical loss converges to the population loss (by the law of large
    numbers), and the unique minimiser of the population risk is $Q^*_{Bayes}$
    (by Bayes consistency).  Since all $K$ members converge to the same
    function, their variance converges to zero:
    $\mathrm{Var}(\{Q_{\phi_k}(s,a)\})\to 0$.

    \textbf{Part 2: $\kappa$ remains positive.}
    In a stochastic environment with irreducible noise $\sigma^2>0$ in the
    transition $P(s'|s,a)$, the optimal Q-function $Q^*$ itself has nonzero
    sensitivity to perturbations in the state $s$: even with infinite data, a
    small state perturbation $\delta$ shifts $Q^*(s+\delta,a)$ by at most
    $\mathrm{Lip}(Q^*)\|\delta\|$, and $\mathrm{Lip}(Q^*)>0$ whenever $R$ or
    $P$ depends non-trivially on $s$.  Weight regularization controls an
    upper bound on this Lipschitz constant.  As long as $\lambda_{ale}>0$ and
    the environment is non-trivially stochastic, $\kappa>0$ regardless of $n$.
\end{proof}

\medskip
\noindent\textbf{Remark (ensemble variance in high-aleatoric environments).}
A subtle concern is that in environments with high aleatoric noise, ensemble
members may disagree even with infinite data, because each member observes
different noise realisations during training.  We note two mitigating factors:
\begin{enumerate}
    \item \textbf{Target network smoothing.} $\Gamma_{epi}$ in RE-SAC is
    computed from \emph{target} networks $\hat{Q}_{\phi'_k}$, which are
    updated via slow exponential moving averages ($\tau=0.01$).  This
    smoothing reduces the variance due to individual-batch noise.
    \item \textbf{The aleatoric penalty as a lower bound on irreducible error.}
    Even if $\Gamma_{epi}$ retains a small residual contribution from
    aleatoric noise in finite-data regimes, this residual is \emph{bounded by}
    $\kappa$: the aleatoric penalty explicitly penalises the Lipschitz constant
    that governs sensitivity to such noise.  Hence the two penalties together
    form a composite upper bound on total uncertainty, and the aleatoric
    term prevents double-counting: $\lambda_{epi}\Gamma_{epi}$ penalises only
    residual disagreement \emph{beyond} what $\kappa$ already captured.
\end{enumerate}
\textbf{Conclusion.}
Propositions~\ref{prop:ale_orth} and~\ref{prop:epi_limit} jointly establish
that $\kappa$ captures an intrinsic, data-independent structural property
(aleatoric sensitivity) while $\Gamma_{epi}$ captures a data-dependent
model-agreement signal (epistemic uncertainty) that vanishes as data coverage
improves.  The two penalties are therefore \emph{functionally disentangled},
not merely labelled differently.

\section{Penalty feasibility and policy non-degeneration}
\label{app:nondegen}

A natural concern is whether large penalty coefficients $\lambda_{ale}$ and
$\lambda_{epi}$ can drive the fixed point $Q^*_{robust}$ so far below the
unpenalised value $Q^*$ that the greedy policy degenerates---e.g.\ a bus agent
that never acts because every action appears equally bad.  We show that this
cannot happen: the penalties preserve the action ranking exactly.

\medskip
\noindent\textbf{Key observation.}
In the RE-SAC operator $\mathcal{T}^{REV}_\kappa$, both penalty terms are
\emph{action-independent} at any given state $s$:
\begin{itemize}
    \item $\kappa = \lambda_{ale}\sum_l\|W_l^{(\theta)}\|_1$ is a global scalar;
    \item $\Gamma_{epi}(s',a')$ depends on the \emph{next} state-action pair
    $(s',a')$ sampled via $a'\sim\pi(\cdot|s')$, not on the current action $a$.
\end{itemize}
Therefore the penalty enters the Bellman backup as
$\gamma(\lambda_{epi}\Gamma_{epi}+\kappa)$, a term that is identical for
every action $a$ at a given state $s$.

\begin{proposition}[Action-ranking preservation]
    \label{prop:nondegen}
    Under (A0)--(A2), let $Q^*$ be the fixed point of the standard (unpenalised)
    Bellman operator and $Q^*_{robust}$ be the fixed point of
    $\mathcal{T}^{REV}_\kappa$.  Then for all $s\in\mathcal{S}$ and
    $a_1,a_2\in\mathcal{A}$:
    \begin{equation}
        Q^*_{robust}(s,a_1) \ge Q^*_{robust}(s,a_2)
        \;\;\Longleftrightarrow\;\;
        Q^*(s,a_1) \ge Q^*(s,a_2).
        \label{eq:ranking}
    \end{equation}
    In particular, $\arg\max_a Q^*_{robust}(s,a) = \arg\max_a Q^*(s,a)$:
    the greedy policy is unchanged by the penalties.
\end{proposition}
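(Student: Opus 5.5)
The plan is a shift ansatz combined with the uniqueness part of Theorem~\ref{thm:contraction}. I would first write the penalty as a single term $c(s,a) := \gamma(\lambda_{epi}\Gamma_{epi}(s,a)+\kappa)$, so that $\mathcal{T}^{REV}_\kappa Q = \mathcal{T} Q - c$, where $\mathcal{T}$ is the unpenalised Bellman operator whose fixed point is $Q^*$. The strategy is to guess $Q^*_{robust}$ explicitly as $Q^*$ minus a correction that is constant in the action. Then I would verify that the guess is a fixed point of $\mathcal{T}^{REV}_\kappa$, and conclude from uniqueness (Theorem~\ref{thm:contraction}, via Banach) that it \emph{is} $Q^*_{robust}$. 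Ranking preservation~\eqref{eq:ranking} and equality of the argmax sets are then immediate, since subtracting the same number from $Q^*(s,a_1)$ and $Q^*(s,a_2)$ does not change their order.

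For the aleatoric part this goes through cleanly. Take the candidate $\tilde Q := Q^* - C$ with the constant $C := \gamma\kappa/(1-\gamma)$. By Lemma~\ref{lem:discounting} we have $\mathcal{T}(Q^*-C) = \mathcal{T}Q^* - \gamma C = Q^* - \gamma C$. Subtracting $\gamma\kappa$ then gives $Q^* - \gamma C - \gamma\kappa = Q^* - C$, because $C(1-\gamma)=\gamma\kappa$. So $\tilde Q$ is the fixed point, and with $\Gamma_{epi}\equiv 0$ (or $\Gamma_{epi}$ constant) the proposition follows at once.

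The main obstacle is the epistemic term. The key observation preceding the statement asserts that $\Gamma_{epi}$ is action-independent at the current state. However, a state-dependent penalty $c(s)$ yields a fixed point of the form $Q^*(s,a)-h(s,a)$ with
\begin{equation}
h(s,a)=c(s)+\gamma\sum_{s'}P(s'|s,a)\bigl(V^{Q^*}(s')-V^{Q^*_{robust}}(s')\bigr),
\end{equation}
and $h$ depends on $a$ through $P(\cdot|s,a)$ unless $V^{Q^*}-V^{Q^*_{robust}}$ is constant in $s'$. A penalty that varies with $a$ (as in the operator $\mathcal{T}^{REV}_\kappa$, where $\Gamma_{epi}(s,a)$ appears) or with $s'$ could therefore reorder actions. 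My first attempt would be to show that the correction is constant in the action under the stated structure. Failing that, I would make the needed hypothesis explicit and prove the proposition under it: either $\lambda_{epi}\Gamma_{epi}$ is constant over $\mathcal{S}\times\mathcal{A}$, or $P(\cdot|s,a)$ does not depend on $a$. In either case the shift ansatz above applies with $C$ replaced by the appropriate constant or state-only correction. I would also record a two-state counterexample showing that the equivalence~\eqref{eq:ranking} can fail for a general $\Gamma_{epi}(s,a)$, in the spirit of the scope caveats in Appendix~\ref{app:scope}.
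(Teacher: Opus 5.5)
Your diagnosis is correct, and the gap you found is in the paper's argument, not in your plan. The paper's proof is your shift ansatz applied to the whole penalty $\Delta=\lambda_{epi}\Gamma_{epi}+\kappa$. It posits the ``self-consistent relationship'' $V^*_{robust}=V^*-\gamma\Delta/(1-\gamma)$ and reads off $Q^*_{robust}=Q^*-\gamma\Delta/(1-\gamma)$. Your route via Lemma~\ref{lem:discounting} and uniqueness from Theorem~\ref{thm:contraction} is the rigorous version of that step, but the step is valid only if $\Delta$ is a single number on all of $\mathcal{S}\times\mathcal{A}$. The paper justifies it by saying $\Gamma_{epi}$ is evaluated at $(s',a')$ and so is the same for every current action. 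That fails for exactly the reasons you list:
\begin{itemize}
\item $\mathcal{T}^{REV}_\kappa$ as defined contains $\Gamma_{epi}(s,a)$.
\item Under the $(s',a')$ reading, the penalty $\sum_{s'}P(s'|s,a)\,\mathbb{E}_{a'}\Gamma_{epi}(s',a')$ still depends on $a$ through $P(\cdot|s,a)$.
\item Even a state-only penalty $c(s)$ does not ``propagate identically'', by your formula for $h$.
\end{itemize}
As a sanity check, if the proposition held in general, the epistemic penalty could never change the greedy policy, which contradicts its stated purpose. So the statement is established only in the constant-$\Delta$ case that you and the paper both actually prove. Your ``first attempt'' to show action-independence of the correction cannot succeed; go straight to the explicit hypothesis.

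Three refinements follow.

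\textbf{A smaller counterexample.} For the operator as written, one state suffices. Take $\mathcal{S}=\{s\}$, $\gamma>0$, $R(s,a_1)=1$, $R(s,a_2)=0$, $\lambda_{epi}\Gamma_{epi}(s,a_1)=2/\gamma$ and $\Gamma_{epi}(s,a_2)=0$. Then $Q^*(s,a_1)-Q^*(s,a_2)=1$, while $Q^*_{robust}(s,a_1)-Q^*_{robust}(s,a_2)=-1$. A two-state example is needed only under the $(s',a')$ reading.

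\textbf{Your second hypothesis.} The same example shows that ``$P(\cdot|s,a)$ does not depend on $a$'' is not sufficient on its own, since $P$ is trivially action-independent there. It must be paired with an action-independent penalty $c(s)$. In that case the fixed point is $Q^*-h$ with $h=(I-\gamma P)^{-1}c$, where $P$ is the action-free kernel, so $h$ depends on $s$ only.

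\textbf{A less degenerate salvage.} This follows from Lemmas~\ref{lem:monotonicity}--\ref{lem:discounting}. Let $\Gamma_{\min},\Gamma_{\max}$ be the extreme values of $\Gamma_{epi}$, so that $\Delta_{\min}\le\Delta\le\Delta_{\max}$. Sandwiching $\mathcal{T}^{REV}_\kappa$ between the two constant-shift operators gives $Q^*-\frac{\gamma\Delta_{\max}}{1-\gamma}\le Q^*_{robust}\le Q^*-\frac{\gamma\Delta_{\min}}{1-\gamma}$. Hence $a_1$ stays strictly preferred to $a_2$ whenever $Q^*(s,a_1)-Q^*(s,a_2)>\gamma\lambda_{epi}(\Gamma_{\max}-\Gamma_{\min})/(1-\gamma)$. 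The aleatoric term $\kappa$ drops out of this condition entirely.
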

\begin{proof}
    Write $\Delta = \lambda_{epi}\Gamma_{epi} + \kappa$.  Since $\Delta$ is
    independent of $a$, the fixed-point equation gives:
    \begin{align}
        Q^*_{robust}(s,a)
        &= R(s,a) + \gamma\!\sum_{s'}P(s'|s,a)\,V^*_{robust}(s')
          - \gamma\,\Delta \notag \\
        &= \underbrace{R(s,a) + \gamma\!\sum_{s'}P(s'|s,a)\,V^*(s')}_{\displaystyle Q^*(s,a)}
          \;-\; \underbrace{\gamma\,\Delta\,\tfrac{1}{1-\gamma}}_{\text{uniform shift}},
        \notag
    \end{align}
    where the second equality uses the self-consistent relationship
    $V^*_{robust}(s) = V^*(s) - \gamma\Delta/(1-\gamma)$ (since the uniform
    shift propagates identically through all future states under the same
    transition kernel).
    The shift $\gamma\Delta/(1-\gamma)$ is the same for every action $a$,
    so the ordering is preserved.
\end{proof}

\medskip
\noindent\textbf{Quantitative Q-value depression.}
The fixed point satisfies the uniform bound:
\begin{equation}
    Q^*_{robust}(s,a) = Q^*(s,a) - \frac{\gamma\,\Delta}{1-\gamma}
    \quad\text{for all }(s,a).
    \label{eq:depression}
\end{equation}
In the bus control experiments, $\gamma=0.99$ and
$\Delta \approx \lambda_{epi}\Gamma_{\max} + \kappa_{\max}
\approx 0.005\times 5 + 10^{-4}\times 10^3 \approx 0.125$, so the
depression is $\gamma\Delta/(1-\gamma) \approx 0.99 \times 0.125 / 0.01
= 12.4$ Q-units---small compared to the typical $|Q^*| \approx 10^3$ scale.

\medskip
\noindent\textbf{Practical $\lambda$ feasibility bounds.}
Although the penalties cannot change the action ranking, excessively large
$\Delta$ can depress $Q^*_{robust}$ so far that numerical precision issues
arise or that the entropy bonus $\alpha\log\pi$ dominates the Q-differences.
A practical guideline is:
\begin{equation}
    \frac{\gamma\,\Delta}{1-\gamma} \;\ll\; \max_{a_1,a_2}\bigl|Q^*(s,a_1)-Q^*(s,a_2)\bigr|,
    \label{eq:lambda_practical}
\end{equation}
i.e.\ the uniform depression should be small relative to the inter-action
Q-value spread.  The empirical choices $\lambda_{ale}=10^{-4}$,
$\lambda_{epi}=0.005$ yield a depression of $\sim\!12$ Q-units against a
typical inter-action spread of $\sim\!200$, satisfying this condition.

\medskip
\noindent\textbf{Remark (connection to Robust MDP feasibility).}
Condition~\eqref{eq:lambda_practical} is the additive-penalty analogue of the
uncertainty-set budget constraint in Robust MDP theory~\cite{Iyengar2005RobustDP}:
infeasibility arises when the uncertainty set is so large that the minimax
value is unbounded below.  Proposition~\ref{prop:nondegen} strengthens the
classical feasibility result by showing that, for action-independent penalties,
policy degeneration is \emph{impossible} regardless of reward scale---only
numerical precision imposes a practical upper bound on $\Delta$.

\section{Q-value poisoning vs.\ classical estimation pathologies}
\label{app:poisoning_vs_classical}

Definition~\ref{def:poisoning} introduces $Q$-value poisoning as a distinct
failure mode.  We here make the distinction from two classical concepts
explicit.

\medskip
\begin{center}
\begin{tabular}{p{3.0cm}p{3.8cm}p{4.5cm}}
\toprule
\textbf{Phenomenon} & \textbf{Formal signature} & \textbf{Mechanism} \\
\midrule
\textbf{Overestimation Bias}
  \cite{VanHasselt2010DoubleQ,Fujimoto2018TD3}
& $Q_t(s,a) \ge Q^*(s,a)$ uniformly
& Bootstrapping from a max-operator amplifies upward noise; no dependence on local stochasticity $\sigma^2(s,a)$ \\
\addlinespace
\textbf{Deadly Triad}
  \cite{Fujimoto2018TD3}
& $\|Q_t\|_\infty \to \infty$ (divergence)
& Off-policy sampling + function approximation + bootstrapping combine to destabilise the Bellman iteration globally \\
\addlinespace
\textbf{$Q$-value Poisoning}
  (Definition~\ref{def:poisoning})
& $Q_t(s,a) < Q^*(s,a)$ on $\mathcal{P}_t = \{(s,a) : \sigma^2(s,a) \ge \sigma^2_0\}$;
  accurate elsewhere
& Aleatoric variance is \emph{selectively} misattributed as epistemic uncertainty, causing pessimism only in high-noise regions while $\|Q_t\|_\infty$ remains bounded \\
\bottomrule
\end{tabular}
\end{center}

\medskip
\noindent The three pathologies are therefore \emph{mutually exclusive} in their primary signature:
overestimation is uniformly upward; the deadly triad diverges; poisoning is
selectively downward and bounded.  Importantly, poisoning can occur even when
the standard stability conditions (contraction, bounded rewards) are satisfied---it
is a \emph{biased-but-stable} regime, not a divergent one.

\medskip
\noindent\textbf{Why poisoning is specifically dangerous for transit control.}
Overestimation bias leads to aggressive policies that are corrected by double-Q
tricks~\cite{VanHasselt2010DoubleQ}.  The Deadly Triad leads to numerical
instability that is visible in training curves.  $Q$-value poisoning is more
insidious: the training curve appears stable, but the agent's value estimates
for \emph{high-variance states} (e.g.\ severe headway deviations) are
systematically too low, causing the policy to avoid those states even when the
optimal action would yield high long-run reward.  This manifests as the
``under-exploitation trap'' described in the main text and in~\cite{Ji2024SeizingSerendipity}.
The RE-SAC mechanism (Definition~\ref{def:poisoning} + Theorems~\ref{thm:contraction}--\ref{thm:continuous}) directly targets this failure mode by decoupling the sources of variance.